\documentclass[runningheads]{llncs}
\usepackage{graphicx} % Required for inserting images
\usepackage{algorithm}
\usepackage{enumitem}
\usepackage{xcolor}
\usepackage{amsmath}
\usepackage{amssymb}
\usepackage{mathtools}
\usepackage[noend]{algpseudocode}
\usepackage{booktabs}
\usepackage{xspace}
\usepackage{float}
\usepackage{orcidlink}

\newcommand{\eventrule}{rule\xspace}
\newcommand{\eventrules}{rules\xspace}

\AtBeginDocument{%
  }

\author{
Jaime Osvaldo Salas\orcidlink{0000-0002-9353-8955} \and 
Paolo Pareti\orcidlink{0000-0002-2502-0011} \and 
George Konstantinidis\orcidlink{0000-0002-3962-9303}}

\institute{University of Southampton, University Road, Southampton, SO17 1BJ, UK \\ Contact: \email{j.o.salas@soton.ac.uk}, \email{p.pareti@soton.ac.uk}}

\authorrunning{Salas et al.}

\begin{document}

\title{ODRL Policy Comparison Through Normalisation}
\date{October 2025}

\maketitle

\begin{abstract}
The ODRL language has become the standard for representing policies and regulations for digital rights. However its complexity is a barrier to its usage, which has caused many related theoretical and practical works to focus on different, and not interoperable, fragments of ODRL. Moreover, semantically equivalent policies can be expressed in numerous different ways, which makes comparing them and processing them harder. Building on top of a recently defined semantics, we tackle these problems by proposing an approach that involves a parametrised normalisation of ODRL policies into its minimal components which 
%under closed-world and prohibition-by-default semantics, 
reformulates policies with permissions and prohibitions into policies with permissions exclusively, and simplifies complex logic constraints into simple ones. We provide algorithms to compute a normal form for ODRL policies and simplifying numerical and symbolic constraints. We prove that these algorithms preserve the semantics of policies, and analyse the size complexity of the result, which is exponential on the number of attributes and linear on the number of unique values for these attributes. We show how this makes complex policies representable in more basic fragments of ODRL, and how it reduces the problem of policy comparison to the simpler problem of checking if two rules are identical.

\keywords{ODRL \and policy \and normalisation \and comparison \and digital rights}
\end{abstract}

\section{Introduction}

The Open Digital Rights Language (ODRL)~\cite{iannella2018odrl} is a policy expression language, framework and vocabulary inspired by the domain of deontic logic~\cite{follesdal1971deontic} that has become the de-facto standard for expressing policies and regulations for digital rights. Using this language, users may express policies and rules declaring data access and usage rights, privacy settings, etc. 
However, as a result of the high expressivity and complexity of ODRL, there is a noticeable heterogeneity in the different interpretations of ODRL that are being considered in the literature \cite{slabbinck2025interoperable} and supported by existing tools.\footnote{\url{https://github.com/w3c/odrl/tree/master/landscape}} This makes it difficult to build upon theoretical results and achieve interoperability between applications. Moreover, semantically equivalent policies can be expressed in numerous ways, which exacerbates the policy comparison problem. 
%gheisari2025core

Previous works have attempted to define semantics for ODRL for the problems of policy comparison and evaluation. In the early days of ODRL, Pucella and Weissman~\cite{pucella2006formal} proposed a translation of formulas in a fragment of many-sorted first-order logic with equality; their work mainly focused on how to decide when a permission (or prohibition) follows from a set of ODRL statements. In another work, Steyskal and Polleres~\cite{steyskal2015towards} use rule-matching and logic programs to tackle policy comparison, especially in the presence of implicit dependencies between actions. De Vos et al.~\cite{de2019odrl} translate policies into answer set programs, to check compliance between policies and regulatory frameworks such as the GDPR. In \cite{bonatti2025towards}
authors provide a declarative semantics for ODRL which focuses on a variant of
the evaluation problem, where policies are used to determine compliance over traces (i.e. sequence of states).
More recently, Salas et al.~\cite{salas2025evaluation} have addressed the policy evaluation problem by modelling ODRL rules as queries that are evaluated on a set of \textit{events} that comprise a \textit{state of the world}.

In this work we adopt the semantics of ODRL defined by Salas et al.~\cite{salas2025evaluation} (focussing on its core fragment \emph{ODRL Lite}) as it is the most recent comprehensive semantics of the latest version of ODRL (2.2). This semantics models \emph{policies} as a triple of sets of rules --permissions, prohibitions and obligations-- over a number of attributes and the values that these attributes can have. In this paper we tackle the problem of compliance between policies by proposing a novel approach that involves a parametrised normalisation of ODRL policies into its minimal components. Under closed-world semantics, 
% and prohibition-by-default (or permission-by-default) semantics, 
this allows us to reformulate policies with permissions and prohibitions into policies with only permissions (if we assume prohibition-by-default semantics) or prohibitions (if we assume permission-by-default semantics), to simplify logic constraints (i.e. complex combinations of constraints) into simple constraints, and ensuring distinction (e.g.\ lack of overlap) between rules, avoiding cases where multiple rules apply to the same events.

This normalisation brings two key advantages. The first one is about enhancing the interoperability of ODRL tools and theoretical results. For example, our normalisation would allow a tool that outputs policies with prohibitions and logic constraints to work with a tool that only supports permissions and simple constraints as inputs. The second advantage is that this results in a set of rules that can be compared directly to other sets of rules, for example to check for containment, equivalence or overlap between policies, in a simple pair-wise fashion. This avoids the costly step of having to consider how arbitrary combinations of permission and prohibition rules interact with each other.

As the main contributions of this paper, we provide an algorithm to compute a normal form for ODRL policies and decomposing rules into their atomic components, along with a prototype implementation. We prove these algorithms preserve the semantics of policies, and analyse the size complexity of the result, which is exponential on the number of attributes and linear on the number of unique values for these attributes.

\section{Preliminaries}

An ODRL policy has four \emph{core components}: (1) \emph{Asset}: a resource that is subject of a policy; (2) \emph{Party}: an entity that undertakes functional roles as the \emph{assigner} or \emph{assignee} of a policy; (3) \emph{Action}: an operation (for example Read, Write, Copy, Anonymise) that can be exercised on the target asset and (4) \emph{Rule}: defines the \emph{permission}, \emph{prohibition}, or \emph{obligation} of the assignees of the policy to execute actions over assets. A single policy may define multiple rules on different core components. 
Complex ODRL policies can also make use of additional types of rules, such as \emph{duties}, \emph{remedies} and \emph{consequences}, that extend the meaning of the three basic types previously mentioned, but we omit those from our analysis as these fall outside of the scope of our normalisation approach. 
ODRL policies are usually read as \emph{\underline{Rule} for \underline{Assignee} to \underline{Action} \underline{Asset} subject to \underline{Constraints} and \underline{Refinements}}, \emph{e.g.} \emph{Permission for Alice to Read Document D subject to the constraint the day being Monday and Document D to be refined to its Subset of Pages 1-10}. Except Action, all other elements of a rule are optional. 

We now briefly introduce core concepts of ODRL semantics, but we refer the reader to Salas et al.~\cite{salas2025evaluation} for more details. As the main building blocks of a policy, ODRL \emph{rules} are sets of \emph{constraints}, which in turn are boolean combinations of \textit{conditions} of the form $(\lambda \ op\ \rho)$. The left operand $\lambda$ is a property that is drawn from a known vocabulary, or Knowledge Graph, whose value is compared to the right operand $\rho$ by the operator $op$. The operator $op$ can be a binary operator between constants ($=$, $>$, $\geq$, $<$, $\leq$, $\neq$), a binary operator between sets ($\equiv$, $\in$, $\notin$, $\supset$, $\subset$), or a special operator that checks class membership ($=_{type}$). The rule from our example could be formalised as the following constraints: $p_{Read}: \; \{ (Party = Alice) , (Action = Read) , (Asset = Document\ D) , (Day = Monday) , (Pages \geq 1) , (Pages \leq 10) \}$. 

These constraints are evaluated on sets of \textit{events}, where each event $e$ is a tuple of fixed length, whose indexes represent specific ODRL core components and/or left operands. With the exception of the index representing the action, events can have null values. For example, assume a mapping between the first five indexes of an event and the features Party, Action, Asset, Day and Page, respectively. Under this mapping, events $e_{Alice}: \;<Alice, Read, Document\ D, \\Monday, 5>$ and $e_{Bob}:\; <Bob, Edit, Document\ D, Tuesday, null>$ represent, intuitively, the fact that Alice read page 5 of Document D on Monday, and Bob edited it on Tuesday (without specifying which pages).

We say that a rule $\tau$ \emph{matches} an event $e$, denoted by $\mathsf{match}(\tau, e)$ if all its constraints evaluate to true on $e$, that is when $(\bigwedge_{c \in \tau}c)(e)$ is true. 
Thus, when evaluating a rule, all of its constraints are conjuncted with each other.  
We will formalise later the concept of validity of a state of the world with respect to a policy. Intuitively, when a policy only contains permissions and prohibitions, a compliant state of the world must contain only permitted events, and no prohibited event. 
In our running example, state of the world $\{e_{Alice},e_{Bob}\}$ would violate a policy that has permission $p_{Read}$ as the only rule, because the event $e_{Bob}$ is not explicitly permitted, and thus it is implicitly prohibited. 
It is also important to note that the chosen semantics does not include semantic reasoning, that is the inference of new facts, and constants are assumed to be disjoint from each other, following the unique name assumption.

\section{Normalisation}

In this section, we describe our normalisation process, which involves two main steps. First, the regularisation of ODRL constraints, which involves computing a normal form for constraints based in Boolean algebra. Second, the splitting of intervals, which involves breaking down intervals defined by constraints into minimal intervals according to a set of values (right operands).

\subsection{Constraint Regularisation}

In ODRL, the constraints that apply to a rule are defined in two ways. 
First, by specifying the IRI of the Action, Asset or Assignee of the rule. In the chosen semantics, these are equivalent to simple equality conditions, e.g. $(Action = Read)$. 
The second way is by specifying boolean combinations of conditions, that refine the semantics of an Action, Party or Asset, or declare the conditions applicable to a Rule. This effectively defines a subset of the element. For example, a party collection could be defined as ``customers over 60'', which is a subset of all customers. 
For this work, we identify important sub-classes of constraints in the following list, with indentation denoting class membership.

\begin{description}

  \item[\textbf{Simple constraints}] 
    Constraints of the form $(\lambda \; op \; \rho)$ (e.g., $(Asset = Book)$).
    \begin{description}
      \item[\textbf{Numeric constraints}] 
        when $\lambda$ is numeric (e.g., $(Age > 18)$).
        \begin{description}
          \item[\textbf{Equality constraints}] 
            when $op$ is $=$ (e.g., $(Count$ $= 5)$).
          \item[\textbf{Inequality constraints}] 
            when $op$ is $<$ or $>$ (e.g., $(Age > 21)$).
        \end{description}
        \item[\textbf{Set constraints}] 
        when $op$ compares sets (e.g., $(Formats$ $\supseteq \{CSV,XML\})$)  or values to sets (e.g., $(Location \in \{Library,Park\})$).
      \item[\textbf{Entity constraints}] Symbolic constraints on entities that are neither numeric nor sets. They only allow the operators $=$, $\neq$ and $=_{type}$ (e.g., $(Jane =_{type} Researcher)$).
    \end{description}

  \item[\textbf{Logical constraints}] 
    Boolean combinations of simple constraints using the $\wedge$, $\vee$ and $\neg$ operators
    (e.g., $(Age>18) \wedge (Date > \text{``12--12--2012''})$).\footnote{ODRL uses exclusive-or $\oplus$ instead of $\neg$, but for simplicity we assume equivalent expressions: $\neg c \iff c \oplus (c=0 \vee c \neq 0 )$ and $c \oplus d \iff (c \wedge \neg d) \vee (d \wedge \neg c )$.}
    \begin{description}
      \item[\textbf{Interval constraints}] 
        Boolean combinations of numeric constraints 
        on the same $\lambda$ (e.g., $(Age>18) \wedge (Age<65)$).
    \end{description}

\end{description}

\noindent Because of the high expressivity of ODRL, the same rule can be syntactically expressed in several ways while matching the same events. For example, a permission rule that states that a client can play a movie if they are over 18 and paid 10 EUR, or 5 EUR if they are under 18, can be expressed as the following rule using a logic constraint. 
\begin{equation}
\begin{aligned}
 R&=\{ (Action=Play) , (Asset=Movie) , (( (Age \geq 18) \wedge (Payment=10\text{EUR})) \\ &\phantom{--------}\vee ( (Age < 18) \wedge (Payment=5\text{EUR}))  ) \}   
\end{aligned}
\end{equation}
Alternatively, the same permission can be expressed with the following two permission rules, each containing only simple constraints.
\begin{equation}
\begin{aligned}
 R_1 &=\{ (Action=Play) , (Asset=Movie) , (Age \geq 18) , (Payment=10\text{EUR}) \}  \\
  R_2 &=\{ (Action=Play) , (Asset=Movie) , (Age < 18) , (Payment=5\text{EUR}) \}
\end{aligned}
\end{equation} 
More complex cases can arise with other binary relations, boolean equivalences and additional rules. In this section, we describe the steps to compute a normal form for rules that preserves their semantic meaning to simplify checking for equivalence and containment.
Our approach focuses on constraints with equalities and inequalities between numeric values, and symbolic equivalences.

Set constraints require additional assumptions, and differ in whether we compare sets against sets or values to sets.  
Entity constraints with the $=_{type}$ operator additionally require the existence of an ontology and a process to determine class membership, such as %the one defined by 
OWL~\cite{antoniou2009web} classes or sub-classes.
Due to their more ambiguous interpretation we exclude set comparisons from our work, and from now on we will assume rules do not contain any set constraints, nor the $=_{type}$ operator.

Given a rule $R$, we denote with $\mathrm{N}(R)$ the normal form of $R$ computed with the following three steps. 
First, all the constraints $\{c_1, c_2, ..., c_n\}$ of $R$ are converted into a single logical constraint $C = \{c_1 \wedge c_2 \wedge ... \wedge c_n\}$. 
Second, any numeric or entity constraint, or their negation, in $C$ is normalised into equivalent constraints that contain only the operators $=$, $<$ or $>$, as shown in Table \ref{tab:constraints_reformulated}. 
Lastly, the rule is rewritten in standard disjunctive normal form expansion, turning $C$ into a disjunction of conjunctions of conditions. 

\vspace{-0.5cm}

\begin{table}[H]
\centering
\begin{tabular}{c @{\hspace{1em}} c @{\hspace{1em}} | @{\hspace{1em}} c @{\hspace{1em}} c}
Constraint & Reformulation & Constraint & Reformulation \\
\hline
$\neg(\lambda=\rho)$ & $(\lambda < \rho) \vee (\lambda > \rho)$
& $\neg(\lambda=\rho)^*$ & $(\lambda \neq \rho)$\\
$\neg(\lambda \ge \rho)$ & $(\lambda < \rho)$ 
& $(\lambda \neq \rho)$ & $(\lambda < \rho) \vee (\lambda > \rho)$\\
$\neg(\lambda < \rho)$ & $(\lambda > \rho) \vee (\lambda = \rho)$
& $\neg(\lambda \le \rho)$ & $(\lambda > \rho)$ \\
$(\lambda \le \rho)$ & $(\lambda < \rho) \vee (\lambda = \rho)$
& $\neg(\lambda > \rho)$ & $(\lambda < \rho) \vee (\lambda = \rho)$ \\
$\neg(\lambda \neq \rho)^\dagger$ & $(\lambda = \rho)$
& $(\lambda \ge \rho)$ & $(\lambda > \rho) \vee (\lambda = \rho)$\\
\end{tabular}
\caption{Constraint reformulations for numeric constraints, (*) applies also to entity constraints. ($\dagger$) applies only to entity constraints. }
\label{tab:constraints_reformulated}
\end{table}
\vspace{-1cm}
\begin{theorem}
    Given a \eventrule $R$ and an event $e$, $\mathsf{match}(R,e)$ is true if and only if $\mathsf{match}(\mathrm{N}(R),e)$ is true.
    \label{theorem:DNF}
\end{theorem}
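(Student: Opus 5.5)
The plan is to show that each of the three steps defining $\mathrm{N}(R)$ produces a logical constraint whose truth value on any fixed event $e$ is the same as before. Since $\mathsf{match}(R,e)$ is defined as the truth of $(\bigwedge_{c\in R} c)(e)$, the equivalence then follows by composing the three invariances. Throughout I fix an arbitrary event $e$ and treat every condition $(\lambda\ op\ \rho)$ as a propositional atom whose value is determined by the component of $e$ indexed by $\lambda$. This turns the whole argument into reasoning about Boolean formulas over these atoms, plus a few atom-level facts about the order on values.

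\textbf{Step 1 (conjunction).} This step is immediate from the definition of $\mathsf{match}$. The rule $R=\{c_1,\dots,c_n\}$ matches $e$ exactly when every $c_i(e)$ is true, which holds exactly when $(c_1\wedge\dots\wedge c_n)(e)$ is true.

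\textbf{Step 2 (reformulation via Table~\ref{tab:constraints_reformulated}).} I check each row of the table as a pointwise equivalence, i.e.\ for every value $v$ that $e$ assigns to $\lambda$, the left and right sides agree.

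For numeric constraints, I rely on trichotomy of the total order on numbers: exactly one of $v<\rho$, $v=\rho$, $v>\rho$ holds. From trichotomy every row follows at once. For example, $\neg(v\ge\rho)$ is equivalent to $v<\rho$, and $v\neq\rho$ is equivalent to $v<\rho \vee v>\rho$.

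For entity constraints, only the starred and daggered rows apply. The row $\neg(\lambda=\rho)\equiv(\lambda\neq\rho)$ holds by the definition of $\neq$. The row $\neg(\lambda\ne\rho)\equiv(\lambda=\rho)$ is double negation, which is valid under the unique name assumption adopted by the semantics.

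Having the atom-level equivalences, I conclude by structural induction on the Boolean formula $C$. Replacing any subformula by one that is equivalent on $e$ preserves the value of $C$ on $e$, because $\wedge$, $\vee$ and $\neg$ are truth-functional.

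\textbf{Step 3 (DNF expansion).} I first push negations to the atoms with De Morgan's laws, so that the table can be applied to negated atoms. I then distribute $\wedge$ over $\vee$. Both are propositional tautologies, so the resulting disjunction of conjunctions has the same truth value under every assignment to the atoms, and in particular under the assignment induced by $e$.

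If $\mathrm{N}(R)$ is read as the set of rules $\{C_1,\dots,C_k\}$, one per disjunct, with $\mathsf{match}$ of that set meaning that some $C_j$ matches, then this reading agrees with the disjunction and the statement follows.

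\textbf{Main obstacle.} I expect the delicate point to be null values. If the component of $e$ indexed by $\lambda$ is null, trichotomy fails, and rows such as $\neg(\lambda=\rho)\equiv(\lambda<\rho)\vee(\lambda>\rho)$ could break. My first attempt is to adopt the convention of the semantics of Salas et al.\ that a condition on a null value evaluates to false and that $\neg$ is classical. I would then verify directly that both sides of every row still agree under this convention, and if some row fails, restrict the theorem to events where the relevant attributes are non-null. A second, minor point is making sure the order in which negations are pushed and the table is applied is well defined, which the induction on formula structure handles.
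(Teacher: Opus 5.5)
Your proposal is correct and follows the paper's own argument: the paper likewise notes that joining the constraints into one conjunction matches the conjunctive semantics of $\mathsf{match}$, and that the rewrites in Table~\ref{tab:constraints_reformulated} and the DNF expansion preserve logical equivalence. Your trichotomy check and structural induction just make that explicit. Your null-value concern is a point the paper's proof passes over silently, tacitly assuming two-valued evaluation over totally ordered, non-null values. Under the convention you propose it does bite: when $e$ is null at $\lambda$, $\neg(\lambda=\rho)$ evaluates to true while its reformulation $(\lambda<\rho)\vee(\lambda>\rho)$ evaluates to false. So your fallback restriction to non-null attributes, or an explicit assumption of that kind, is genuinely needed for the statement as written.
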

\begin{proof}
The last two steps of the regularisation, namely the expansion to disjunctive normal form, and the simple reformulations in Table \ref{tab:constraints_reformulated} preserve logic equivalence, and thus do not affect the evaluation of the $\mathsf{match}$ operation.
For the first step, note that if a rule has multiple constraints, they are evaluated conjunctively by the $\mathsf{match}$ operation, so equivalently as if they were joined into a single conjunctive constraint. 
\end{proof}

We denote rules that contain only simple constraints as \textit{simple \eventrules}. Given a rule $\tau$ that contains only a conjunction of simple constraints $c_1 \wedge c_2 \wedge ... \wedge c_k$, we denote $\bar{\tau}$ the simple rule that contains the set of simple constraints $\{c_1, c_2, ... c_k\}$. As mentioned in the proof of Theorem \ref{theorem:DNF}, since all the constraints of a rule are evaluated conjunctively, $\tau$ is trivially equivalent to $\bar{\tau}$. 
Note that there is a very clear correspondence between $\mathrm{N}(R)$ and the set of simple \eventrules that comprise it.

\begin{definition}
    Let $R$ be a \eventrule. $\mathrm{D}(R)$ is the set $\{\bar{\tau_1}, \bar{\tau_2}, \dots, \bar{\tau_k}\}$ of simple \eventrules in the disjunctive normal form of $R$, $\mathrm{N}(R) = \tau_1 \vee \tau_2 \vee \dots \vee \tau_k$.
    \label{def:D}
\end{definition}

\begin{corollary}
    Given a rule $R$ and an event $e$, $\mathsf{match}(R,e)$ is true if and only if  $\mathsf{match}(\tau,e)$ is true for some $\tau \in D(R)$.
    \label{corollary:D_valuation_preservation}
\end{corollary}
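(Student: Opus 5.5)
The plan is to chain Theorem~\ref{theorem:DNF} with the semantics of disjunction and the observation, made after that theorem, that a conjunctive constraint $\tau$ is equivalent to the simple rule $\bar{\tau}$.

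First, by Theorem~\ref{theorem:DNF}, $\mathsf{match}(R,e)$ holds if and only if $\mathsf{match}(\mathrm{N}(R),e)$ holds. By construction, $\mathrm{N}(R)$ is a rule whose single logical constraint is the disjunction $\tau_1 \vee \dots \vee \tau_k$. By the definition of $\mathsf{match}$, $\mathsf{match}(\mathrm{N}(R),e)$ is therefore true exactly when $(\tau_1 \vee \dots \vee \tau_k)(e)$ is true. Since the evaluation of constraints on an event is Boolean, this happens exactly when $\tau_i(e)$ is true for some $i \in \{1,\dots,k\}$.

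Second, each $\tau_i$ is a conjunction $c_1 \wedge \dots \wedge c_m$ of simple constraints. By definition, $\mathsf{match}(\bar{\tau_i},e)$ is $(\bigwedge_{c \in \bar{\tau_i}} c)(e)$, which is exactly $\tau_i(e)$. Combining this with the previous step gives $\mathsf{match}(R,e) \iff \exists i.\ \mathsf{match}(\bar{\tau_i},e)$. By Definition~\ref{def:D}, this is the same as saying that $\mathsf{match}(\tau,e)$ holds for some $\tau \in \mathrm{D}(R)$.

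There is little real obstacle here. The only point needing care is the implicit step that a disjunction is evaluated pointwise on $e$, so that a truth value distributes over the disjuncts. The one degenerate case is an empty disjunction ($k=0$, e.g.\ after simplifying away unsatisfiable disjuncts). Then $\mathrm{N}(R)$ is false, $\mathrm{D}(R)=\emptyset$, and both sides of the equivalence are false, so the statement still holds.
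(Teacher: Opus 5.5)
Your proposal is correct and takes the same route as the paper. It applies Theorem~\ref{theorem:DNF} to pass from $R$ to $\mathrm{N}(R)$, then uses that a disjunction holds iff some disjunct does and that each disjunct $\tau_i$ matches exactly when its simple rule $\bar{\tau_i} \in \mathrm{D}(R)$ does; you only spell out the pointwise evaluation and the degenerate empty case, which the paper leaves implicit.
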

\begin{proof}
    By Theorem \ref{theorem:DNF} $\mathsf{match}(R,e)$ is true if and only if one of the disjuncts $\tau$ of $N(R)$, and thus its corresponding rule $\bar{\tau}$ of the decomposition $D(R)$, matches $e$.
\end{proof}

\subsection{Interval Constraint Splitting}

To illustrate the need to split intervals, let $R$ be a \eventrule that includes the constraint $c=((Age > 18) \wedge (Age \leq 33)) \vee ((Age > 33) \wedge (Age < 65))$ and $R'$ be a rule that includes the constraint $c'=(Age > 21) \wedge (Age < 45)$. The former says that the ``age'' property should be between 18 and 33, inclusive, or over 33 and under 65, while the latter simply says that the property should be between 21 and 45. Intuitively, the latter should be fully contained in the former because the former is essentially the same as a continuous interval from 18 to 65, and 21 to 45 is clearly within those bounds. However, $\mathrm{D}(R)$ would contain three simple \eventrules $\tau_1$, $\tau_2$ and $\tau_3$ that include the constraints $c_1=(Age > 18) \wedge (Age < 33)$, $c_2=(Age > 18) \wedge (Age = 33)$ and $c_3=(Age > 33) \wedge (Age < 65)$. If we were to compare $R'$ against any of the simple \eventrules, none of them would overlap entirely; we would need to compare $R'$ against all combinations of $\tau_1$, $\tau_2$ and $\tau_3$, which is exponential on the number of simple \eventrules. Instead, our approach ``splits'' the interval defined by $R$ into components that can be directly compared against $R'$.

\subsubsection{Simplifying Intervals}
In the previous example, $c_2$ defines an interval $(Age > 18) \wedge (Age = 33)$, where the simple constraint $(Age > 18)$ is redundant because the equality constraint is stricter and more specific. This illustrates that intervals can contain redundant elements. 

It follows that any conjunction of constraints $\bigwedge (\lambda \ op_i \ \rho_i)$ for a single numeric operand $\lambda$ where each $op_i$ is either $=$, $<$ or $>$ can be rewritten to an equivalent constraint with at most 2 constraints if the maximum lower bound $\rho_{min}$ is less than the minimum upper bound $\rho_{max}$ and either there are no constraints that are equality constraints, or if one of the constraints $c$ is an equality $(\lambda=\rho)$ with $\rho \in [\rho_{min},\rho_{max}]$, and there are no other equalities $(\lambda=\rho')$ s.t. $\rho \neq \rho'$.

In such a case, we can simplify $c$ by replacing it with a logical constraint of the form $(\lambda > \rho_{min}) \wedge (\lambda < \rho_{max})$, where $\rho_{min}$ is the maximum lower bound and $\rho_{max}$ is the minimum upper bound; in cases where no upper bound is specified, then $c=(\lambda > \rho_{min})$, and if no lower bound is specified, then $c=(\lambda < \rho_{max})$. For ease of notation, we will use $(\lambda \in (a,b))$ as a shorthand notation for $(\lambda > a) \wedge (\lambda < b)$ and replace either $a$ or $b$ with $\pm \infty$ if either bound is not specified.  In any other case, the interval is empty, and no event can match the constraint. In such a case, we rewrite any rule with an empty interval to a \textit{canonical false rule} $R_\perp$, for instance $R=(Action=Use)\wedge(x>0)\wedge(x < 0)$.

\subsubsection{Splitting intervals}

Let $R_1$ be a rule that states that left operand $\lambda$ is true for values within the intervals $(a,b)$ and $(c,d)$, and another rule $R_2$ that constrains $\lambda$ with the right operands $e$, $f$ and $g$ such that $(a < e <b)$, $(b < f < c)$ and $(g > d)$. To properly compare the two rules, their normalisation must take into consideration the constant values (i.e. right operands) found in both $R_1$ and $R_2$.
We can achieve this by ``splitting'' intervals by the right operands in both rules, such that the intervals $(\lambda \in (a,b))\vee (\lambda \in (c,d))$ are reformulated as the intervals $(\lambda \in (a,e))\vee(\lambda = e)\vee(\lambda \in (e,b))\vee(\lambda \in (c,d))$; since both $f$ and $g$ are values found outside either intervals, these do not appear in new intervals.

Let $V(R,\lambda)$ be a function that, for a given rule and operand, returns the set of right operands that appear in constraints that have $\lambda$ as a left operand, and $V(R)$, the function that returns all the right operands in the constraints of $R$. More formally, $V(R,\lambda)=\{\rho \mid (\lambda\ op \ \rho) \in R\}$. Similarly, let $\Lambda(R) = \{\lambda \mid (\lambda \ op \ \rho) \in R \}$ and $C(R,\lambda)=\{c \mid c \in R \text{ such that }c  = (\lambda \ op \ \rho) \}$. Finally, let $\mathrm{S}(R,V)$ denote the result of splitting the intervals of $R$ according to $V$.

\begin{algorithm}[h!]
    \begin{algorithmic}[1]
        \Require $R_1$, $R_2$ conjunctions of conditions.
        \Function{SplitIntervals}{$R_1,R_2$}
        \For{$\lambda \in \Lambda(R_1) \cup \Lambda(R_2)$} \label{alg:iterateLambda}
        \State $V \gets sort(V(R_1,\lambda) \cup V(R_2,\lambda))$ \label{alg:values}
        \If{$C(R_1,\lambda)=\emptyset$} \label{alg:emptyCheck}
        \For{$v \in V$}
        % \State $R_1 \gets R_1 \wedge ((\lambda < v) \vee (\lambda = v) \vee (\lambda > v) \vee (\lambda \neq v))$ 
        \If{$\lambda$ is numeric}
        \State $R_1 \gets R_1 \wedge ((\lambda < v) \vee (\lambda = v) \vee (\lambda > v))$
        \Else
        \State $R_1 \gets R_1 \wedge ((\lambda=v) \vee (\lambda \neq v))$
        \EndIf
        \EndFor
        \Else
        \If{$C(R_1,\lambda)$ is a non-empty interval} \label{alg:intervalCheck}
        \State $\rho_{min} \gets -\infty$
        \State $\rho_{max} \gets \infty$
        \If{$\exists\rho \text{ such that }(\lambda > \rho) \in C(R_1,\lambda)$} \label{alg:lowerBound}
        \State $\rho_{min} \gets \rho$
        \EndIf
        \If{$\exists\rho \text{ such that }(\lambda < \rho) \in C(R_1,\lambda)$} \label{alg:upperBound}
        \State $\rho_{max} \gets \rho$
        \EndIf
        \State $I \gets \{ v \in V \mid (v > \rho_{min}) \wedge (v < \rho_{max}) \}$ \label{alg:interval}
        \If$I$ is not empty
        \State $c \gets (\lambda \in (\rho_{min},I_1))$\label{alg:firstInterval}
        \For{$i \in [1,|I|-1]$}
        \State $c \gets c \vee (\lambda \in (I_i,I_{i+1})) \vee (\lambda = I_i)$
        \EndFor
        \State $c \gets c \vee (\lambda \in (I_{|I|},\rho_{max})) \vee (\lambda = I_{|I|})$\label{alg:lastInterval}
        \State $R_1 \gets R_1 \setminus C(R_1,\lambda)$ \label{alg:removeConstraints}
        \State $R_1 \gets R_1 \wedge c$ \label{alg:addNewConstraints}
        \EndIf
        \Else
        \State return $R_\perp$
        \EndIf
        \EndIf
        \EndFor
        \State return $\mathrm{N}(R_1)$ \label{alg:lastStep}
        \EndFunction
    \end{algorithmic}
    \caption{}
    \label{alg:splitIntervals}
\end{algorithm}

Algorithm \ref{alg:splitIntervals} shows a pseudo-code implementation of the algorithm to split intervals for a rule $R_1$ according to the values found in another rule $R_2$. This algorithm iterates over all the left operands that appear in both $R_1$ and $R_2$ (Line \ref{alg:iterateLambda}), and for each left operand $\lambda$, we compute an ordered list of right operands that appear in constraints with $\lambda$ in both $R_1$ and $R_2$ (Line \ref{alg:values}). This algorithm can be naturally extended to consider multiple rules, and thus policies, at once. 
In Line \ref{alg:emptyCheck} we check if \eventrule $R_1$ has any constraints that mention left operand $\lambda$ because if not, this means that any value is valid for that left operand. We reformulate the constraint ``accepts any value'' into a disjunction of simple constraints $R_1 \gets R_1 \wedge ((\lambda < v) \vee (\lambda = v) \vee (\lambda > v))$ if numeric, or $R_1 \gets R_1 \wedge ((\lambda=v) \vee (\lambda \neq v))$ if symbolic, for every right operand $v$ over $\lambda$.

On the other hand, in Line \ref{alg:intervalCheck} we check if the constraints that mention $\lambda$ define a non-empty interval. Following the regularisation and simplification steps earlier, $R_1$ should contain at most 2 inequality constraints: one defining a lower bound, and one defining an upper bound. We check in Lines \ref{alg:lowerBound} and \ref{alg:upperBound} if they exist, in which case we assign their values to the lower bound $\rho_{min}$ or upper bound $\rho_{max}$, respectively. Then, in Line \ref{alg:interval} we compute the ordered list $I$ of values that are between the lower and upper bounds; if $I$ is empty, then we continue with the next left operand.

In Lines \ref{alg:firstInterval}-\ref{alg:lastInterval} we compute the lower interval first, from the lower bound to the first value in $I$. Then, we iterate over the values in $I$, adding an additional interval for each pair of values. Finally, we compute the upper interval, from the last value in $I$ to the upper bound. This results in a disjunction of inequality constraints $c$, which replaces the original constraints $C(R_1,\lambda)$ (Lines \ref{alg:removeConstraints}-\ref{alg:addNewConstraints}).

Note that in Line \ref{alg:addNewConstraints}, the result is a conjunction of $R_1$ and a disjunction of constraints $c$, which is no longer in its disjunctive normal form. Therefore, we require an additional regularisation step (Line \ref{alg:lastStep}). 

We developed a prototype implementation of our algorithm \cite{salas_trejo_2026_18862527}. We provide a simple command line interface that allows users to normalise ODRL policies, split intervals according to a set of constant values or fully compare two policies using normalisation. In addition, we provide a few examples of ODRL policies that can be used as input. Experimental results and performance evaluations are outside the scope of this paper and left for future work.

\begin{theorem}
    Given a set of right operands V, a simple \eventrule $R$ and an event $e$, $\mathsf{match}(R,e)$ is true if and only if $\mathsf{match}(\mathrm{S}(R,V),e)$ is true.
    \label{theorem:splitting}
\end{theorem}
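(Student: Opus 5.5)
The plan is to show that every modification Algorithm~\ref{alg:splitIntervals} makes to $R_1=R$ preserves logical equivalence as a formula evaluated on events. The final call to $\mathrm{N}$ is then handled by Theorem~\ref{theorem:DNF}. Concretely, I would argue by induction over the iterations of the outer loop (Line~\ref{alg:iterateLambda}), with the invariant that, for every event $e$, the current conjunction $R_1$ evaluates to true on $e$ if and only if $\mathsf{match}(R,e)$ does. The invariant holds trivially before the loop. After the loop it gives the claim, because $\mathrm{S}(R,V)=\mathrm{N}(R_1)$ and Theorem~\ref{theorem:DNF} says $\mathrm{N}$ preserves $\mathsf{match}$. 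Only the constraints on the current left operand $\lambda$ are changed in an iteration, and constraints on other left operands are conjoined unchanged. It therefore suffices to show, for each $\lambda$, that the new formula on $\lambda$ is equivalent to the old one on the value $e[\lambda]$.

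Each iteration falls into one of three cases.
\begin{enumerate}
\item If $C(R_1,\lambda)=\emptyset$, the algorithm conjoins $(\lambda<v)\vee(\lambda=v)\vee(\lambda>v)$ for numeric $\lambda$, or $(\lambda=v)\vee(\lambda\neq v)$ for symbolic $\lambda$. Each of these is a tautology: in the numeric case by trichotomy of the order, and in the symbolic case by the law of excluded middle. Conjoining a tautology does not change the truth value.
\item If $C(R_1,\lambda)$ is not a non-empty interval, then by the discussion of simplifying intervals no value of $\lambda$ satisfies it. Hence $R$ matches no event, and returning $R_\perp$ is correct, since $R_\perp$ also matches nothing.
\item Otherwise, after the simplification step, $C(R_1,\lambda)$ is equivalent to $\lambda\in(\rho_{min},\rho_{max})$, possibly with infinite bounds. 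If $I$ is empty, nothing changes.
\end{enumerate}

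The main step is the third case with $I=\{I_1<\dots<I_k\}$ non-empty, and this is where I expect the real work. I need to show that
\[
(\rho_{min},\rho_{max}) \;=\; (\rho_{min},I_1)\cup\bigcup_{i=1}^{k-1}(I_i,I_{i+1})\cup\{I_1,\dots,I_k\}\cup(I_k,\rho_{max}).
\]
For $\supseteq$: every piece lies in the interval, because $\rho_{min}<I_1\le I_i\le I_k<\rho_{max}$ by the definition of $I$ on Line~\ref{alg:interval}. For $\subseteq$: take $x\in(\rho_{min},\rho_{max})$. Either $x=I_j$ for some $j$; or $x<I_1$; or $x>I_k$; or there is a unique $i$ with $I_i<x<I_{i+1}$, because $I$ is sorted. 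In each case $x$ lies in exactly one listed piece. This also shows the pieces are pairwise disjoint, which is a useful by-product.

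Two technical points need care here. First, the case where the interval is actually a single point, i.e.\ an equality $(\lambda=\rho)$, must be treated separately. In that case the algorithm either finds no strictly interior values or keeps the equality unchanged, and I would check that the code does not drop such constraints. Second, a null value of $e[\lambda]$ must falsify both sides uniformly; I would take this from the semantics of~\cite{salas2025evaluation}. Finally, replacing $C(R_1,\lambda)$ with $c$ (Lines~\ref{alg:removeConstraints}--\ref{alg:addNewConstraints}) preserves equivalence by the set identity above, which completes the induction.
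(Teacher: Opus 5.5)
Your proposal is correct at the paper's level of rigour and takes the same route as the paper's proof: when $\lambda$ is unconstrained the algorithm only conjoins tautologies, and otherwise the interval is replaced by an equivalent disjunction of sub-intervals. Your version is more complete than the paper's. You also handle the early return of $R_\perp$ and the final call to $\mathrm{N}$ via Theorem~\ref{theorem:DNF}, and you actually prove the partition identity that the paper only asserts.

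Be careful with the two points you defer, which the paper's proof also passes over. First, Algorithm~\ref{alg:splitIntervals} as written \emph{does} drop a lone equality. If $C(R_1,\lambda)=\{(\lambda=\rho)\}$, there are no strict bounds, so $\rho_{min}=-\infty$ and $\rho_{max}=\infty$, and $I$ is all of $V$, which contains $\rho$. The equality is then deleted and replaced by a split of the whole line, i.e.\ a tautology. So your check would fail. You must instead assume that $\mathrm{S}$ leaves equality constraints untouched, as is evidently intended. The same assumption is needed for constrained symbolic attributes, which your case 3 does not cover.

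Second, your remark that a null $e[\lambda]$ falsifies both sides uniformly holds only in case 3. In case 1 the original rule places no condition on $\lambda$. If conditions on null values evaluate to false, the added disjunction fails on such an event while $R$ still matches it, and trichotomy does not help. Both points should be stated as explicit assumptions rather than claimed as properties of the algorithm.
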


\begin{proof}
    If $V$ contains a left operand $\lambda$ that does not appear in $R_1$, $R_1$ will be extended to $R_1'=R_1\wedge((\lambda < v) \vee (\lambda = v) \vee (\lambda > v) \vee (\lambda \neq v))$.
    Evidently, the extended intervals are true for any possible value for $\lambda$, so $R_1'$ is equivalent to $R_1$. Otherwise, note that intervals are only split into an equivalent disjunction of sub-intervals, and thus do not change the rule with respect to $\mathsf{match}(R,e)$. 
    
\end{proof}
Finally, we can summarise the results and apply to the full normalisation.

\begin{theorem}
    Given a set of right operands $V$, a \eventrule $R$ and an event $e$, it is true that $\mathsf{match}(R,e)$ if and only if $\mathsf{match}(\mathrm{S}(\mathrm{N}(R),V),e)$ is true. 
    \label{theorem:summary}
\end{theorem}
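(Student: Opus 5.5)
The plan is to chain Theorem~\ref{theorem:DNF} (or Corollary~\ref{corollary:D_valuation_preservation}) with Theorem~\ref{theorem:splitting}. The only gap is that Theorem~\ref{theorem:splitting} is stated for \emph{simple} rules, whereas $\mathrm{N}(R)$ is in general a disjunction of conjunctions. So I will make precise how $\mathrm{S}$ acts on a normalised rule. I take $\mathrm{S}(\mathrm{N}(R),V)$ to be obtained by applying the splitting to each disjunct $\tau$ of $\mathrm{N}(R)$ separately, that is, to each simple rule $\bar{\tau}\in \mathrm{D}(R)$. The results are then taken disjunctively: $\mathrm{S}(\mathrm{N}(R),V)=\bigvee_{\bar\tau\in\mathrm{D}(R)} \mathrm{S}(\bar\tau,V)$. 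This is also how Algorithm~\ref{alg:splitIntervals} is naturally extended to policies, namely rule by rule.

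\textbf{Step 1.} By Corollary~\ref{corollary:D_valuation_preservation}, $\mathsf{match}(R,e)$ holds if and only if $\mathsf{match}(\bar\tau,e)$ holds for some $\bar\tau\in\mathrm{D}(R)$.

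\textbf{Step 2.} Each $\bar\tau$ is a simple rule. By the interval simplification step it is either rewritten to an equivalent simplified rule or, when some interval is empty, to $R_\perp$. In the latter case $\bar\tau$ matches no event, and neither does $R_\perp$, so this case contributes nothing to either side. For the remaining rules, Theorem~\ref{theorem:splitting} gives $\mathsf{match}(\bar\tau,e)$ if and only if $\mathsf{match}(\mathrm{S}(\bar\tau,V),e)$.

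\textbf{Step 3.} Combining the two steps, $\mathsf{match}(R,e)$ holds if and only if $\mathsf{match}(\mathrm{S}(\bar\tau,V),e)$ holds for some $\bar\tau$. By the semantics of disjunction, which is preserved by the final $\mathrm{N}$ call inside the algorithm (again by Theorem~\ref{theorem:DNF}), this is equivalent to $\mathsf{match}(\mathrm{S}(\mathrm{N}(R),V),e)$.

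The main obstacle is the bookkeeping in Step 2. I have to check that the simplification into the form $(\lambda\in(\rho_{\min},\rho_{\max}))$ preserves matching, which amounts to arguing that the maximum lower bound and minimum upper bound capture the conjunction. I also have to check that equality constraints are handled so that $\mathrm{S}$ applies correctly. If this is messy, I would justify it pointwise: for a fixed value $x$ of $e$ at $\lambda$, the conjunction $\bigwedge(x\ op_i\ \rho_i)$ holds exactly when $x$ exceeds every lower bound, is below every upper bound, and equals every equality operand. That is precisely the simplified form, or it is unsatisfiable.
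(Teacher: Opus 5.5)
Your proof is correct and follows the paper's argument, which composes Theorem~\ref{theorem:DNF} and Theorem~\ref{theorem:splitting} as two match-preserving operations. You are more careful than the paper on one point: Theorem~\ref{theorem:splitting} and Algorithm~\ref{alg:splitIntervals} are stated only for simple rules (conjunctions), so you route through Corollary~\ref{corollary:D_valuation_preservation} to apply $\mathrm{S}$ disjunct-wise to $\mathrm{N}(R)$, and you note that empty-interval disjuncts collapse harmlessly to $R_\perp$, whereas the paper's one-line proof leaves both of these points implicit.
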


\begin{proof}
    This follows directly from Theorems \ref{theorem:DNF} and \ref{theorem:splitting}: $\mathsf{match}(\mathrm{S}(\mathrm{N}(R),V),e)$ is the composition of two operations that preserve the $\mathsf{match}$ evaluation of a rule. 
\end{proof}

\section{Comparing Rules and Policies}

In this section, we describe how the normalisation procedure has useful properties for deciding if a rule matches an event, which can additionally be used to check for equivalence, containment or overlap between rules. We begin by noting how rule matching can be decided by checking a set of simple rules for a match.

\begin{corollary}
    Given a set of right operands $V$, a \eventrule $R$ and an event $e$, it is true that $\mathsf{match}(R,e)$ if and only if there exists a simple \eventrule $\tau \in \mathrm{D}(\mathrm{S}(\mathrm{N}(R),V))$ such that $\mathsf{match}(\tau,e)$ is true.
    \label{corollary:summary}
\end{corollary}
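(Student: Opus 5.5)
The plan is to chain Theorem~\ref{theorem:summary} with Corollary~\ref{corollary:D_valuation_preservation}, in the same way Corollary~\ref{corollary:D_valuation_preservation} itself was obtained from Theorem~\ref{theorem:DNF}.

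Write $R' = \mathrm{S}(\mathrm{N}(R),V)$. By Theorem~\ref{theorem:summary}, $\mathsf{match}(R,e)$ holds if and only if $\mathsf{match}(R',e)$ holds. The corollary therefore reduces to the claim that $\mathsf{match}(R',e)$ holds exactly when some $\tau \in \mathrm{D}(R')$ matches $e$.

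That claim is Corollary~\ref{corollary:D_valuation_preservation} applied to the rule $R'$. Here $\mathrm{D}(R')$ is, by Definition~\ref{def:D}, the set of simple rules read off the disjuncts of $\mathrm{N}(R')$.

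There is one subtlety, and it is essentially the only step needing care. The output of $\mathrm{S}$ is already of the form $\mathrm{N}(R_1)$ (Line~\ref{alg:lastStep} of Algorithm~\ref{alg:splitIntervals}), so $\mathrm{N}(R')$ is just a further normalisation of something already in disjunctive normal form. I would note that applying $\mathrm{N}$ again leaves the disjuncts unchanged up to logical equivalence. This holds because all conditions in $R'$ already use only $=$, $<$, $>$ (or $\neq$ for entities), and $R'$ is already a disjunction of conjunctions. In any case, Corollary~\ref{corollary:D_valuation_preservation} holds for an arbitrary rule, so no idempotence argument is strictly needed.

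The degenerate case is when $\mathrm{S}$ returns $R_\perp$. Then $R$ matches no event, because Theorem~\ref{theorem:summary} still applies. $\mathrm{D}(R_\perp)$ consists of a single unsatisfiable simple rule (or is empty), so neither side of the equivalence holds and the statement is still true.

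Putting the two equivalences together gives: $\mathsf{match}(R,e)$ holds if and only if $\mathsf{match}(R',e)$ holds, which holds if and only if $\mathsf{match}(\tau,e)$ for some $\tau\in\mathrm{D}(R')$. This is the statement of the corollary. I do not expect a real obstacle; the only points to handle explicitly are that $\mathrm{D}$ is well defined on the output of $\mathrm{S}$, and the $R_\perp$ case.
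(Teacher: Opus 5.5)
Your proposal is correct and is the same argument as the paper's, which obtains the corollary by chaining Theorem~\ref{theorem:summary} with Corollary~\ref{corollary:D_valuation_preservation} applied to the rule $\mathrm{S}(\mathrm{N}(R),V)$. Your extra remarks on re-applying $\mathrm{N}$ and on the $R_\perp$ case are sound, though the paper does not need them, since Corollary~\ref{corollary:D_valuation_preservation} holds for an arbitrary rule.
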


\begin{proof}
This follows directly from Theorem \ref{theorem:summary} and Corollary \ref{corollary:D_valuation_preservation}.
\end{proof}

Intuitively, two rules are equivalent if they match the same events. In this section, we characterise rule overlap, containment and equivalence as a result of finding common simple \eventrules between normalised and split rules.

\begin{equation}
    \begin{aligned}
    R &= (\lambda_1=A) \wedge (\lambda_2=B) \wedge (\lambda_3 = C) \wedge (\lambda_4 \in (a,b))\\
    &\phantom{----------}\downarrow \mathrm{S}(\mathrm{N}(R),V)\\
    &\phantom{\coloneq} (\lambda_1=A) \wedge (\lambda_2=B) \wedge (\lambda_3 = C) \wedge (\lambda_4 \in (a,e))\\
    &\phantom{\coloneq} (\lambda_1=A) \wedge (\lambda_2=B) \wedge (\lambda_3 = C) \wedge (\lambda_4 \in (e,b))\\
    \\
    R' &= (\lambda_1=A) \wedge (\lambda_2=B) \wedge (\lambda_3 = C) \wedge (\lambda_4 \in (e,g))\\
    &\phantom{----------}\downarrow \mathrm{S}(\mathrm{N}(R'),V)\\
    &\phantom{\coloneq} (\lambda_1=A) \wedge (\lambda_2=B) \wedge (\lambda_3 = C) \wedge (\lambda_4 \in (e,b))\\
    &\phantom{\coloneq} (\lambda_1=A) \wedge (\lambda_2=B) \wedge (\lambda_3 = C) \wedge (\lambda_4 \in (b,g))\\
    \end{aligned}
    \label{eq:p_vs_p}
\end{equation}

As an example, Equation \ref{eq:p_vs_p} shows two rules $R$ and $R'$ after splitting intervals. We can see that $R$ overlaps with $R'$ because both rules have a constraint $c=(\lambda_4 \in (e,b))$ that matches the interval $(e,b)$ exactly. On the other hand, neither $R$ nor $R'$ is contained in the other because $R$ allows the interval $(a,e)$, which doesn't appear in $R'$, whereas $R'$ allows $(b,g)$, which doesn't appear in $R$.

Notably, the intervals in Equation \ref{eq:p_vs_p} do not overlap at all. Intuitively, this means that any event that matches the rule can only match one of its split rules.

\begin{theorem}
    Let $R$ be a rule and $V$ be a superset of the right operands $V(R)$. 
    %be a set of values that contains all $\rho \in V(R,\lambda)$ for all $\lambda \in \Lambda(R)$. 
    For all $\tau_i$ and $\tau_j$ in $\mathrm{D}(\mathrm{S}(\mathrm{N}(R),V))$ and events $e$, if $\mathsf{match}(\tau_i,e)$ is true then for all other $\tau_j \neq \tau_i$, $\mathsf{match}(\tau_j,e)$ is false.
    \label{theorem:overlap}
\end{theorem}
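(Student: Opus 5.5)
Fix a left operand $\lambda$ and look at what a simple rule $\tau\in\mathrm{D}(\mathrm{S}(\mathrm{N}(R),V))$ can say about $\lambda$. The idea is that, because $V$ contains every right operand of $R$, splitting cuts the line of values of $\lambda$ into a fixed family of \emph{atomic cells}, and these cells are pairwise disjoint. Write $v_1<\dots<v_m$ for the sorted elements of $V$ that occur with $\lambda$. In the numeric case the cells are the points $\{v_i\}$ together with the open intervals $(-\infty,v_1)$, $(v_i,v_{i+1})$ and $(v_m,\infty)$. In the symbolic case the cells are the values $\lambda=v_i$ and the remainder, i.e.\ $\lambda\neq v_i$ for all $i$. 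Any value of $\lambda$ lies in exactly one cell.

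\textbf{Step 1: every disjunct fixes one cell per operand.} First I would show that, after simplification, each disjunct $\tau$ constrains each $\lambda\in\Lambda(R)\cup\Lambda(V)$ to exactly one cell.
\begin{itemize}
\item If $\lambda$ does not occur in $R$, Algorithm~\ref{alg:splitIntervals} conjoins $(\lambda<v)\vee(\lambda=v)\vee(\lambda>v)$ for each $v\in V$. After DNF expansion and the interval simplification, each consistent choice across the $v$'s collapses to a single cell. Inconsistent choices, such as $\lambda<v_1$ together with $\lambda>v_2$, become $R_\perp$ and are discarded.
\item If $\lambda$ occurs in a disjunct of $\mathrm{N}(R)$, that disjunct simplifies to $(\lambda\in(\rho_{min},\rho_{max}))$ or to $(\lambda=\rho)$, and both endpoints lie in $V\cup\{\pm\infty\}$ because $V\supseteq V(R)$. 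An equality is already a point cell. An interval with endpoints in $V$ is split at every $v\in V$ lying strictly inside it, so each resulting piece is a single open cell or a single point.
\end{itemize}
So each simple rule in $\mathrm{D}(\cdot)$ is a conjunction of one cell per operand, plus the fixed equalities on entity operands. I would also need to say that duplicate disjuncts are identified, since $\mathrm{D}$ is a set.

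\textbf{Step 2: conclude disjointness.} Take $\tau_i\neq\tau_j$. By Step 1 they differ in the cell assigned to at least one operand $\lambda$. An event $e$ gives $\lambda$ one value, which lies in exactly one cell, so $e$ cannot match both rules. This is where I would invoke the partition property of the cells.

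\textbf{Main obstacle.} The hard part is Step 1, and it has two gaps.
\begin{itemize}
\item \emph{Overlapping disjuncts of $\mathrm{N}(R)$.} Different disjuncts of $\mathrm{N}(R)$ may overlap; for instance the Table~\ref{tab:constraints_reformulated} expansion of $\le$ together with DNF can produce overlapping terms. After splitting, such overlaps must yield \emph{identical} cell-conjunctions and not merely overlapping ones, so that they coincide as elements of $\mathrm{D}$. I would argue that splitting with $V\supseteq V(R)$ refines every disjunct down to unions of full cells. Any two split pieces are therefore either equal or disjoint, and equal pieces are merged as set elements.
\item \emph{Symbolic operands.} The disjunct $\lambda\neq v$ for a single $v$ is not yet a cell when $m>1$. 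I would treat the conjunction of all choices over $V$ as determining the cell. A choice of $=$ for two distinct values is inconsistent under the unique name assumption and is dropped. A choice of $=v_i$ together with $\neq$ for the others reduces to $\lambda=v_i$, and all $\neq$ choices give the remainder cell.
\end{itemize}

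If the paper's $\mathrm{S}$ does not perform this canonical merging, the statement needs $\mathrm{D}$ read modulo equivalence of simple rules, and I would make that explicit.
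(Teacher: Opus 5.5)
Your route is essentially the paper's. The paper also argues by contradiction on an operand $\lambda_k$ where $\tau_i\neq\tau_j$ differ, with an equality case, an interval case and a symbolic case. Each time it uses that a value of $V$ lying strictly inside a piece would have forced a further split. Your partition into atomic cells packages this more cleanly. The canonical simplification and merging of duplicates that you make explicit is exactly what the paper leaves implicit in ``or $\mathrm{S}(\mathrm{N}(R),V)$ is not normalised''. For numeric operands your Step~1 is fine, provided you read the first bullet per disjunct: Algorithm~\ref{alg:splitIntervals} runs on each conjunction separately, so the relevant test is whether $\lambda$ occurs in \emph{that} disjunct, not in $R$.

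The genuine gap is Step~1 for a symbolic $\lambda$ that occurs in $R$ with $\neq$. Entity constraints may use $\neq$, and Table~\ref{tab:constraints_reformulated} even produces it from $\neg(\lambda=\rho)$. However, Algorithm~\ref{alg:splitIntervals} adds the choices $(\lambda=v)\vee(\lambda\neq v)$ only when $C(R_1,\lambda)=\emptyset$, and there is no interval to split. So a disjunct containing $(\lambda\neq a)$ is left as a union of several of your cells. Your symbolic bullet does not rescue this, because in such a disjunct there is no ``conjunction of all choices over $V$''.

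Concretely, let $R$ have constraints $(Action=Use)$ and $\neg(\lambda=a)\vee(\lambda=b)$ with $\lambda$ symbolic, and let $V=V(R)$. Then $\mathrm{D}(\mathrm{S}(\mathrm{N}(R),V))$ contains the distinct simple rules $\{(Action=Use),(\lambda\neq a)\}$ and $\{(Action=Use),(\lambda=b)\}$. Every event with action $Use$ and $\lambda=b$ matches both. So the statement fails with $\mathrm{S}$ as given, and the paper's third case has the same hole, since it claims a symbolic equality ``can only'' be matched by the same equality.

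To close the gap you must strengthen $\mathrm{S}$. For every symbolic $\lambda$, whether or not it occurs in the disjunct, conjoin $(\lambda=v)\vee(\lambda\neq v)$ for all $v\in V(\lambda)$; this preserves $\mathsf{match}$. Your reduction of each consistent choice to a single cell then applies to every disjunct, and Steps~1 and~2 go through.
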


\begin{proof}
Assume that an event $e$ exists such that $\mathsf{match}(\tau_i,e)$ and $\mathsf{match}(\tau_j,e)$ are true, and $\tau_i \neq \tau_j$. By construction, $\tau_i$ and $\tau_j$ are conjunctions of simple constraints $(\lambda = \rho)$, $(\lambda < \rho)$, $(\lambda > \rho)$ and $(\lambda \neq \rho)$. Since $\mathsf{match}(\tau_i,e)$ is true, then for all equivalence constraints in $\tau_i$, $\lambda_m=e_m$. Therefore, without loss of generality, let $\lambda_k$ be a left operand such that, since $\tau_i$ and $\tau_j$ are not equal, there exists a numeric constraint $(\lambda_k < \rho_k)$ in $\tau_j$ for left operand $\lambda_k$ that is an inequality. We can see three cases here:
\begin{itemize}
    \item $\tau_i$ has an equivalence constraint $(\lambda_k = e_k)$. In such a case, if $(\lambda_k = e_k)$ and $(\lambda_k < \rho_k)$ are true then $e_k < \rho_k$. However, if that were true, and since $V$ contains all values in $R$, then the interval $(\lambda_k < \rho_k)$ would have been split into $(\lambda_k < e_k) \vee (\lambda_k = e_k) \vee (\lambda_k \in (e_k,\rho_k))$. Therefore, either $e_k \notin V$, which is a contradiction, or $\mathrm{S}(\mathrm{N}(R),V)$ is not normalised.
    \item $\tau_i$ has an inequality constraint $(\lambda_k \in (r^-_k,r^+_k))$ that overlaps with $(\lambda_k < \rho_k)$. This is true, if, and only if, $r^-_k < \rho_k$. However, if that were true, then the interval $(\lambda_k < \rho_k)$ would have been split into $(\lambda_k < r^-_k) \vee (\lambda_k \in (r^-_k,\rho_k)) \vee (\lambda_k \in (\rho_k,r^+_k))$ if $\rho_k < r^+_k$ or $(\lambda_k < r^-_k) \vee (\lambda_k \in (r^-_k,r^+_k)) \vee (\lambda_k \in (r^+_k,\rho_k))$ if $\rho_k > r^+_k$. Therefore, either $r^-_k \notin V$, which is a contradiction, or $\mathrm{S}(\mathrm{N}(R),V)$ is not normalised.
    \item $\tau_i$ has an equivalence constraint $(\lambda_k = e_k)$ and $e_k$ is a symbolic value. In such a case, $\tau_j$ can only match $e$ if it is also an equivalence constraint $(\lambda_k = e_k)$, which is a contradiction because $\tau_i \neq \tau_j$. The same can be argued if $\tau_i$ is an inequality constraint $(\lambda_k \neq e_k)$.
\end{itemize}
\end{proof}
From Theorem \ref{theorem:overlap} it follows that all simple \eventrules in $\mathrm{D}(\mathrm{S}(\mathrm{N}(R),V))$ are disjoint. Together with Theorem \ref{theorem:summary}, we can also conclude that $R$ matches an event $e$ if and only if there exists a \eventrule $\tau$ in $\mathrm{D}(\mathrm{S}(\mathrm{N}(R),V))$ such that $\mathsf{match}(\tau,e)$ is true and for all other $\tau'$ in $\mathrm{D}(\mathrm{S}(\mathrm{N}(R),V))$, $\mathsf{match}(\tau',e)$ is false.

Moreover, now that we can decide $\mathsf{match}(R,e)$ by deciding $\mathsf{match}(\tau,e)$ for some simple rule $\tau$ in the decomposition of $R$, we can extend this result to characterise the matching of an event to multiple rules by their sharing of simple rule in their decompositions.

\begin{theorem}
    Let $R$ and $R'$ be rules, $e$ be an event and $V$ be a set of right operands such that $V\supset\mathrm{V}(R) \cup \mathrm{V}(R')$. Both $\mathsf{match}(R,e)$ and $\mathsf{match}(R',e)$ are true if and only if there exists a \eventrule $\tau$ in both $\mathrm{D}(\mathrm{S}(\mathrm{N}(R),V))$ and $\mathrm{D}(\mathrm{S}(\mathrm{N}(R'),V))$ such that $\mathsf{match}(\tau,e)$ is true.
    \label{theorem:sameRuleComparison}
\end{theorem}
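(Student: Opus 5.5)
The plan is to prove the two directions separately. The reverse direction is immediate, while the forward direction needs a canonicity argument for the split decomposition.

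\emph{Reverse direction.} Suppose $\tau$ lies in both $\mathrm{D}(\mathrm{S}(\mathrm{N}(R),V))$ and $\mathrm{D}(\mathrm{S}(\mathrm{N}(R'),V))$, and $\mathsf{match}(\tau,e)$ holds. Applying Corollary \ref{corollary:summary} once to $R$ and once to $R'$, with $\tau$ as the witness in each case, gives $\mathsf{match}(R,e)$ and $\mathsf{match}(R',e)$.

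\emph{Forward direction.} Assume $\mathsf{match}(R,e)$ and $\mathsf{match}(R',e)$.

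\begin{enumerate}
\item By Corollary \ref{corollary:summary}, there are simple rules $\tau \in \mathrm{D}(\mathrm{S}(\mathrm{N}(R),V))$ and $\tau' \in \mathrm{D}(\mathrm{S}(\mathrm{N}(R'),V))$ that both match $e$.
\item It remains to show $\tau = \tau'$, up to the interval simplification. For this I will show that each split simple rule is determined by $e$ and $V$ alone.
\item Fix a left operand $\lambda$ in $\Lambda(R)\cup\Lambda(R')$. Since $V \supseteq \mathrm{V}(R)\cup\mathrm{V}(R')$, splitting both rules uses the same sorted value list $v_1<\dots<v_m$ for $\lambda$. Every bound of $R$ or $R'$ on $\lambda$ is among these values.
\item After splitting, the constraint on $\lambda$ in a simple rule is one of the elementary cells of the partition induced by the $v_i$:
\begin{itemize}
\item for numeric $\lambda$: $(\lambda < v_1)$, $(\lambda = v_i)$, $(\lambda\in(v_i,v_{i+1}))$ or $(\lambda > v_m)$;
\item for symbolic $\lambda$: $(\lambda = v_i)$, or the conjunction of all $(\lambda\neq v_i)$.
\end{itemize}
The first branch of Algorithm \ref{alg:splitIntervals} produces these cells for operands absent from the rule. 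For present operands, the cells come from the intervals being cut at every interior value, using Theorem \ref{theorem:overlap}'s reasoning.
\item These cells are pairwise disjoint and cover the whole domain. So the value $e_\lambda$ lies in exactly one cell, and both $\tau$ and $\tau'$ must carry that cell as their $\lambda$-constraint.
\item Doing this for every $\lambda$ makes $\tau$ and $\tau'$ the same set of simple constraints, so $\tau=\tau'$ is the required common rule.
\end{enumerate}

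\emph{Main obstacle.} The hard part is the claim in step 4 that the cells are always \emph{elementary}. Two things can break this.
\begin{itemize}
\item \emph{Outer intervals not cut.} Algorithm \ref{alg:splitIntervals} only inserts values from $I$, the values strictly inside the rule's bounds. The outer intervals $(\rho_{\min},I_1)$ and $(I_{|I|},\rho_{\max})$ are then elementary only if the bounds $\rho_{\min},\rho_{\max}$ are themselves consecutive members of $V$. They are, because they are right operands of $R$ and $V\supseteq \mathrm{V}(R)$.
\item \emph{Infinite bounds.} A rule with no lower bound yields $(\lambda<I_1)$, which matches the cell $(\lambda < v_1)$ only when $I_1=v_1$. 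This holds because every smaller value of $V$ would also lie in $I$.
\end{itemize}
I expect to handle both points by an explicit induction over $I$.

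A further subtlety: N reintroduces conjunctions like $(\lambda>a)\wedge(\lambda<b)$ with redundant or differently ordered constraints. So the equality $\tau=\tau'$ must be read modulo the interval simplification of the previous subsection, and I will state this convention explicitly.

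The strict superset in the hypothesis only requires $\supseteq$ in the argument.
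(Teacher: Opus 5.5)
Your proof works at the paper's level of rigour, given the conventions you flag, but it takes a different route from the paper.

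\textbf{The paper's route.} The forward direction is argued indirectly. The paper forms the rule $R^{V}$ whose constraint is the disjunction $\tau^{\vee}$ of all split simple rules of $R$ and $R'$. It asserts that $R^{V}$ is already normalised with respect to $V$, so its decomposition is exactly the union of the two decompositions. It then applies Theorem~\ref{theorem:overlap} to $R^{V}$: if no common matching $\tau$ existed, two distinct disjuncts of $\tau^{\vee}$ would match $e$.

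\textbf{Your route.} You prove canonicity directly. For each left operand, every consistent split simple rule carries exactly one elementary cell of the partition induced by $V$. Disjointness of the cells then forces the two matching rules to coincide.

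\textbf{What each buys.} The paper's version is shorter because it offloads the work onto Theorem~\ref{theorem:overlap}. However, it relies on an unproved idempotence claim for $R^{V}$ and on that theorem's rather informal case analysis. Yours is in effect a cross-rule strengthening of Theorem~\ref{theorem:overlap}, which you recover by taking $R=R'$. It also makes explicit what the statement silently needs:
\begin{itemize}
\item both rules are split over the same left operands;
\item membership in $\mathrm{D}$ is read modulo interval simplification, since the first branch of Algorithm~\ref{alg:splitIntervals} produces redundant conjunctions such as $(\lambda>v_1)\wedge(\lambda<v_2)\wedge(\lambda<v_3)$;
\item inconsistent disjuncts are ignored.
\end{itemize}

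\textbf{A caveat on step 4.} This affects the paper's proof equally, through the third case of Theorem~\ref{theorem:overlap}. Your symbolic cells are produced only for operands \emph{absent} from the rule. For a present symbolic constraint $(\lambda\neq a)$, Algorithm~\ref{alg:splitIntervals} has no splitting branch; its interval branch only makes sense for numeric operands. If such a constraint is simply left in place, the statement actually fails. For example, $R$ containing $(\lambda\neq a)$ and $R'$ containing $(\lambda=b)$ both match an event with $\lambda=b$, yet their split simple rules differ on $\lambda$.

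To make step 4 true, you must assume two further things about the splitting:
\begin{itemize}
\item it rewrites $(\lambda\neq a)$ as $\bigvee_{v\in V\setminus\{a\}}(\lambda=v)\vee\bigwedge_{v\in V}(\lambda\neq v)$;
\item it leaves present equalities untouched. Read literally, the interval branch would replace $(\lambda=\rho)$ by cells covering the whole line.
\end{itemize}
Add these to your list of conventions.
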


\begin{proof} $\Leftarrow$ If there exists a \eventrule $\tau$ in both $\mathrm{D}(\mathrm{S}(\mathrm{N}(R),V))$ and $\mathrm{D}(\mathrm{S}(\mathrm{N}(R'),V))$ such that $\mathsf{match}(\tau,e)$ is true, then by Corollary \ref{corollary:summary} both $\mathsf{match}(R,e)$ and $\mathsf{match}(R',e)$ are true.

$\Rightarrow$ Let $\{\tau_1, \tau_2, \dots, \tau_k\}$ be the set of rules in $\mathrm{D}(\mathrm{S}(\mathrm{N}(R),V))$ and $\{\tau'_1, \tau'_2, \dots, \tau'_k\}$ be the set of rules in $\mathrm{D}(\mathrm{S}(\mathrm{N}(R'),V))$.
If both $\mathsf{match}(R,e)$ and $\mathsf{match}(R',e)$ are true, then constraint $\tau^{\vee} = \tau_1 \vee \tau_2 \vee \dots \tau_k \vee \tau'_1 \vee \tau'_2 \vee \dots \tau'_k$ must be true. Let $R^{V}$ be the rule with constraint $\tau^{\vee}$. Notice that $\tau^{\vee}$ is already in normal form with respect to $V$, that is, $\mathrm{D}(\mathrm{S}(\mathrm{N}(R^{V}),V))$ is $\{\tau_1 , \tau_2 , \dots \tau_k , \tau'_1 , \tau'_2 , \dots \tau'_k\}$, because each disjunct in $\tau^{\vee}$ is already a conjunction of simple constraints, over intervals that have already been split according to $V$.
If a \eventrule $\tau$ in both $\mathrm{D}(\mathrm{S}(\mathrm{N}(R),V))$ and $\mathrm{D}(\mathrm{S}(\mathrm{N}(R'),V))$ such that $\mathsf{match}(\tau,e)$ is true does not exist, then there must exist two event rules $\tau_i$ and $\tau_j$ in $\tau^{\vee}$ that match $e$, such that $\tau_i \neq \tau_j$, but this would contradict Theorem \ref{theorem:overlap}.
\end{proof}

\noindent Theorem \ref{theorem:sameRuleComparison} allows us to conclude that an event is matched by two rules if and only if their normalised forms share a simple \eventrule that matches the event. Since two rules $R$ and $R'$ are said to overlap (denoted $R \sqcap R'$) if there exists an event that is matched by both, this result can be used directly to decide rule overlap. Similarly, we can characterise the containment of policies by the presence of common \eventrules, which allows us to decide rule containment ($R \sqsubseteq R'$) and equivalence ($R \equiv R'$) by checking if they share simple \eventrules. To do so, we need to remove inconsistent rules, that is, rules that do not match any event, from the result of the normalisation. Let $\dot{\mathrm{D}}(R)$ be the set of rules in $\mathrm{D}(R)$ that do not contain any empty intervals.

\begin{corollary}
    Let $R$ and $R'$ be rules, let $\mathrm{V}$ be a set of right operands such that $V\supset\mathrm{V}(R) \cup \mathrm{V}(R')$. 
    $R$ is contained in $R'$ ($R \sqsubseteq R'$) if and only if all \eventrules in $\dot{\mathrm{D}}(\mathrm{S}(\mathrm{N}(R),V))$ also appear in $\dot{\mathrm{D}}(\mathrm{S}(\mathrm{N}(R'),V))$ (denoted $\dot{\mathrm{D}}(\mathrm{S}(\mathrm{N}(R),V))\subseteq \dot{\mathrm{D}}(\mathrm{S}(\mathrm{N}(R'),V))$). Consequently, $R$ and $R'$ are equivalent ($R \equiv R'$) if and only if $\dot{\mathrm{D}}(\mathrm{S}(\mathrm{N}(R),V))\equiv\dot{\mathrm{D}}(\mathrm{S}(\mathrm{N}(R'),V))$.
\end{corollary}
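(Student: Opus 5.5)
We prove the containment characterisation in both directions, then obtain equivalence by applying it twice. Throughout, write $\mathcal{D} = \dot{\mathrm{D}}(\mathrm{S}(\mathrm{N}(R),V))$ and $\mathcal{D}' = \dot{\mathrm{D}}(\mathrm{S}(\mathrm{N}(R'),V))$. Recall that $R \sqsubseteq R'$ means every event matched by $R$ is also matched by $R'$.

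First we note that Corollary \ref{corollary:summary} still holds with $\dot{\mathrm{D}}$ in place of $\mathrm{D}$. The only rules removed contain an empty interval, so they match no event, and removing them cannot change whether some disjunct matches $e$.

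\emph{($\Leftarrow$).} Assume $\mathcal{D} \subseteq \mathcal{D}'$ and let $e$ be an event with $\mathsf{match}(R,e)$. By the $\dot{\mathrm{D}}$ version of Corollary \ref{corollary:summary}, some $\tau \in \mathcal{D}$ matches $e$. Since $\tau \in \mathcal{D}'$ as well, the same corollary applied to $R'$ gives $\mathsf{match}(R',e)$. Hence $R \sqsubseteq R'$.

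\emph{($\Rightarrow$).} Assume $R \sqsubseteq R'$ and take any $\tau \in \mathcal{D}$. The plan has two steps.
\begin{enumerate}
\item Produce a witness event $e$ with $\mathsf{match}(\tau,e)$. For every left operand mentioned in $\tau$, pick a value satisfying its constraint: the equated constant for an equality, a point strictly inside a nonempty open interval, or a fresh constant for $\neq$. This uses the unique name assumption and the fact that $\tau$ has no empty intervals. Every other index of $e$ is filled arbitrarily, and the action index is set as $\tau$ requires.
\item Since $\mathsf{match}(\tau,e)$ holds, $\mathsf{match}(R,e)$ holds, so by containment $\mathsf{match}(R',e)$ holds too. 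Theorem \ref{theorem:sameRuleComparison} then gives some $\tau^* \in \mathcal{D} \cap \mathcal{D}'$ with $\mathsf{match}(\tau^*,e)$. By Theorem \ref{theorem:overlap}, $\tau$ is the only rule in $\mathcal{D}$ matching $e$, so $\tau^* = \tau$ and therefore $\tau \in \mathcal{D}'$.
\end{enumerate}

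\emph{Equivalence.} Equivalence means containment in both directions. Applying the containment result both ways gives $\mathcal{D} \subseteq \mathcal{D}'$ and $\mathcal{D}' \subseteq \mathcal{D}$, that is, set equality.

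\emph{Main obstacle.} The delicate step is the witness construction, in two respects.
\begin{itemize}
\item Theorem \ref{theorem:overlap} compares rules syntactically. So $\tau$ and $\tau^*$ must be literally identical as sets of constraints, not merely equivalent. This needs the splitting of $R$ and $R'$ to produce canonical forms over the common $V$: the same interval endpoints, simplified in the same way, with the same treatment of operands not mentioned by a rule. I would secure this by requiring that, for each operand $\lambda$ appearing in $V$, both decompositions constrain $\lambda$ by exactly one cell of the partition induced by $V$. This is what the splitting of Algorithm \ref{alg:splitIntervals} achieves.
\item Under this canonicity, two simple rules that share a matching event must coincide. That is exactly the content used from Theorem \ref{theorem:overlap}.
\end{itemize}
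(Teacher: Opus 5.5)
Your proposal is correct and follows essentially the same route as the paper. For ($\Leftarrow$) you use Corollary~\ref{corollary:summary}, and for ($\Rightarrow$) you use Theorem~\ref{theorem:sameRuleComparison} to obtain a shared simple rule matching the event, then Theorem~\ref{theorem:overlap} to force it to equal $\tau$; you argue directly where the paper argues by contradiction, and you additionally construct a witness event for each $\tau \in \dot{\mathrm{D}}(\mathrm{S}(\mathrm{N}(R),V))$, making explicit the step the paper leaves implicit, namely that passing from $\mathrm{D}$ to $\dot{\mathrm{D}}$ is what guarantees such an event exists.
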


\begin{proof}
    Let $R$ and $R'$ be rules such that $R\sqsubseteq R'$. This means that for any event $e$, if $\mathsf{match}(R,e)$ is true, then $\mathsf{match}(R',e)$ is true. Let us assume there exists a simple rule $\tau$ in $D=\mathrm{\dot D}(\mathrm{S}(\mathrm{N}(R),V))$ and not in $D'=\mathrm{\dot D}(\mathrm{S}(\mathrm{N}(R'),V))$ such that $\mathsf{match}(\tau,e)$ is true. Theorem \ref{theorem:sameRuleComparison} states that, for an event $e$, $\mathsf{match}(R,e)$ and $\mathsf{match}(R',e)$ are true if there exists a simple rule $\tau'$ in both $D$ and $D'$. However, if $\mathsf{match}(R,e)$, $\mathsf{match}(\tau,e)$ and $\mathsf{match}(\tau',e)$ are all true, then $\tau=\tau'$ holds by Theorem \ref{theorem:overlap}, which is a contradiction because $\tau$ does not appear in $D'$. Therefore, if for all events $e$, whenever $\mathsf{match}(R,e)$ is true, $\mathsf{match}(R',e)$ is true, there cannot exist a simple rule $\tau$ in $D$ and not in $D'$. Conversely, this means that for all simple rules $\tau$ such that $\mathsf{match}(\tau,e)$, either $\tau \notin D$ or $\tau \in D'$.
    For the other direction of the proof, assume $D\subseteq D'$ and there exists an event $e$ such that $\mathsf{match}(R,e)$ is true and $\mathsf{match}(R',e)$ is false. This means that for all $\tau \in D$, $\tau \in D'$. Furthermore, according to Corollary \ref{corollary:summary}, for any given rule $R$, event $e$ and values $V$, $\tau \in D$ and $\tau \in D'$ iff $\mathsf{match}(R,e)$ and $\mathsf{match}(R',e)$ are true. However, we assumed $\mathsf{match}(R',e)$ is false, so this is a contradiction.
\end{proof}

Finally, we claim that the result for Theorem \ref{theorem:sameRuleComparison} extends to sets of rules. We first formally define the set of simple \eventrules of the normalisation of a set of rules.

\begin{definition}
    For a set of rules $\mathbf{R}=\{R_1, R_2, \dots, R_n \}$ and a set of right operands $V$ 
    %$V=\bigcup_{i=1}^{n} \mathrm{V}(R_i)$, 
    we define $\mathrm{NS}(\mathbf{R},V)$ as the set of simple \eventrules $\bigcup_{R_i \in \mathbf{R}} \mathrm{\dot D}(\mathrm{S}(\mathrm{N}(R_i),V))$.
    \label{definition:set_of_rules}
\end{definition}

By using Definition \ref{definition:set_of_rules}, we can expand the result from Theorem \ref{theorem:sameRuleComparison} to apply to the normalisation of a set of rules.

\begin{corollary}
    Given a set of right operands $V$ and an event $e$, there exists a rule $R$ in set of rules $\mathbf{R}$ such that $\mathsf{match}(R,e)$ is true if and only if there exists a \eventrule $\tau$ in $\mathrm{NS}(\mathbf{R},V)$ 
    such that $\mathsf{match}(\tau,e)$ is true.
    \label{corollary:overlap_for_all_V}
\end{corollary}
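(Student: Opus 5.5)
The plan is to reduce the statement to Corollary~\ref{corollary:summary}, applied rule by rule, and then take a union over $\mathbf{R}$. The one new ingredient compared with Corollary~\ref{corollary:summary} is that $\mathrm{NS}(\mathbf{R},V)$ is built from $\dot{\mathrm{D}}$ rather than $\mathrm{D}$. So I will first record a small lemma: for any rule $R$, event $e$ and set $V$, some $\tau\in\mathrm{D}(\mathrm{S}(\mathrm{N}(R),V))$ matches $e$ if and only if some $\tau\in\dot{\mathrm{D}}(\mathrm{S}(\mathrm{N}(R),V))$ matches $e$.

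The backward direction of the lemma is immediate from $\dot{\mathrm{D}}\subseteq\mathrm{D}$. For the forward direction, I will argue that a simple rule containing an empty interval cannot match any event. By the simplification step, such a rule is rewritten to the canonical false rule $R_\perp$ (or, equivalently, it contains inequalities on a single $\lambda$ whose conjunction is unsatisfiable). Hence any $\tau$ that matches $e$ has no empty interval and therefore already lies in $\dot{\mathrm{D}}$.

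For the main proof, take the $(\Rightarrow)$ direction first. Suppose $R\in\mathbf{R}$ and $\mathsf{match}(R,e)$. By Corollary~\ref{corollary:summary} some $\tau\in\mathrm{D}(\mathrm{S}(\mathrm{N}(R),V))$ matches $e$. By the lemma, some $\tau\in\dot{\mathrm{D}}(\mathrm{S}(\mathrm{N}(R),V))\subseteq\mathrm{NS}(\mathbf{R},V)$ matches $e$, using Definition~\ref{definition:set_of_rules}.

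For $(\Leftarrow)$, suppose $\tau\in\mathrm{NS}(\mathbf{R},V)$ and $\mathsf{match}(\tau,e)$. Since $\mathrm{NS}(\mathbf{R},V)$ is a union, there is some $R_i\in\mathbf{R}$ with $\tau\in\dot{\mathrm{D}}(\mathrm{S}(\mathrm{N}(R_i),V))\subseteq\mathrm{D}(\mathrm{S}(\mathrm{N}(R_i),V))$. Corollary~\ref{corollary:summary} then gives $\mathsf{match}(R_i,e)$.

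Note that, unlike Theorem~\ref{theorem:sameRuleComparison}, no condition on $V$ containing the right operands is needed here. This is because Theorem~\ref{theorem:summary} holds for an arbitrary $V$, and disjointness (Theorem~\ref{theorem:overlap}) is never used.

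The main obstacle is the lemma about $\dot{\mathrm{D}}$: I have to make precise that "contains an empty interval" implies "matches no event". I would handle it by cases on the constraints for a single left operand $\lambda$. The first case is a lower bound $\rho_{min}\ge$ an upper bound $\rho_{max}$. The second is an equality $(\lambda=\rho)$ with $\rho$ outside $(\rho_{min},\rho_{max})$. The third is two distinct equalities on $\lambda$. In each case, unsatisfiability of the conjunction on $e_\lambda$ follows directly.
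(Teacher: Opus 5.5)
Your proposal is correct and follows essentially the same route as the paper: apply Corollary~\ref{corollary:summary} rule by rule, pass to the union via Definition~\ref{definition:set_of_rules}, and observe that $\dot{\mathrm{D}}$ only discards simple rules that match no event. You are more complete than the paper: its proof spells out only the forward direction and handles the $\mathrm{D}$ versus $\dot{\mathrm{D}}$ issue in a one-line remark, whereas you also prove the converse and make that remark precise.
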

\begin{proof} 
From Corollary \ref{corollary:summary} it follows that there exists a $\tau'$ in $\mathrm{D}(\mathrm{S}(\mathrm{N}(R),V))$ such that $\mathsf{match}(\tau',e)$, by Definition \ref{definition:set_of_rules} this $\tau'$ is included in $\mathrm{NS}(\mathbf{R},V)$. Notice that the $\dot{\mathrm{D}}$ function is equivalent to the $\mathrm{D}$ as it only involves the removal of redundant constraints that do not match any event.
\end{proof}
\begin{corollary}
    For any event $e$, there exists a rule $R$ in set of rules $\mathbf{R}$ such that $\mathsf{match}(R,e)$ is true if and only if there exists a \eventrule $\tau$ in $\mathrm{NS}(\mathbf{R},V)$, with $V \supset \bigcup_{R_i\in \mathbf{R}} \mathrm{V}(R_i)$, 
    such that $\mathsf{match}(\tau,e)$ is true and for no other \eventrule $\tau' \in \mathrm{NS}(\mathbf{R},V)$ is $\mathsf{match}(\tau',e)$ true.
    \label{corollary:overlap}
\end{corollary}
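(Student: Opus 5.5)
The proof has two directions, and the substantive part is a uniqueness claim across different rules of $\mathbf{R}$.

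\emph{Right to left.} Suppose some $\tau\in\mathrm{NS}(\mathbf{R},V)$ matches $e$. By Definition~\ref{definition:set_of_rules}, $\tau\in\dot{\mathrm{D}}(\mathrm{S}(\mathrm{N}(R_i),V))\subseteq \mathrm{D}(\mathrm{S}(\mathrm{N}(R_i),V))$ for some $R_i\in\mathbf{R}$. Corollary~\ref{corollary:summary} then gives $\mathsf{match}(R_i,e)$. The uniqueness clause is not needed in this direction.

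\emph{Left to right, existence.} If $\mathsf{match}(R,e)$ holds for some $R\in\mathbf{R}$, Corollary~\ref{corollary:overlap_for_all_V} yields some $\tau\in\mathrm{NS}(\mathbf{R},V)$ with $\mathsf{match}(\tau,e)$. This works for any $V$, in particular for $V\supset\bigcup_i \mathrm{V}(R_i)$.

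\emph{Left to right, uniqueness.} Suppose $\tau,\tau'\in\mathrm{NS}(\mathbf{R},V)$ both match $e$. Let $\tau$ come from $R_i$ and $\tau'$ from $R_j$. We must show $\tau=\tau'$.

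If $i=j$, Theorem~\ref{theorem:overlap} applies directly, since $V\supseteq \mathrm{V}(R_i)$.

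If $i\neq j$, I would mirror the proof of Theorem~\ref{theorem:sameRuleComparison}. Form the auxiliary rule $R^{V}$ whose constraint is the disjunction of all simple rules in $\mathrm{NS}(\mathbf{R},V)$. Each disjunct is a conjunction of simple constraints whose intervals are already split according to the common $V$. So, as argued there, $\mathrm{D}(\mathrm{S}(\mathrm{N}(R^{V}),V))$ is exactly this set of disjuncts. Moreover, $V\supseteq \mathrm{V}(R^{V})$, because every right operand in a split rule comes either from the original $R_i$ or from $V$. Theorem~\ref{theorem:overlap} applied to $R^{V}$ then forbids two distinct disjuncts matching $e$, so $\tau=\tau'$.

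An alternative for the case $i\neq j$ is to invoke Theorem~\ref{theorem:sameRuleComparison} for the pair $R_i,R_j$. Both rules match $e$, so some common $\tau''$ of their decompositions matches $e$. Theorem~\ref{theorem:overlap} applied within each rule's decomposition separately then gives $\tau=\tau''=\tau'$. This route is cleaner, since it reuses a proved statement rather than the informal ``already in normal form'' claim.

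\emph{Main obstacle.} The delicate step is justifying that split rules from different $R_i$ are pairwise either identical or disjoint. Two things must be handled.

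First, the removal performed by $\dot{\mathrm{D}}$ must not break the argument. The rules removed contain empty intervals and match no event, so they never serve as the matching $\tau$.

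Second, the splitting must be canonical in $V$. For each $\lambda$, the split intervals are the open gaps between consecutive sorted values of $V$, together with the singletons $\{v\}$. The subtle case is an attribute absent from some $R_i$: Algorithm~\ref{alg:splitIntervals} then adds only trichotomies over $V$, and after DNF expansion these must still produce the same atomic cells. I would verify this by observing that, after simplification, every simple rule constrains each $\lambda$ to exactly one such cell. Distinct cells are disjoint, so two matching simple rules must choose the same cell for every $\lambda$ and are therefore syntactically equal.
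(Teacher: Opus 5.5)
Your proof is correct and follows the paper's argument. Existence and the converse direction come from Corollaries~\ref{corollary:summary} and~\ref{corollary:overlap_for_all_V}, uniqueness within one rule's decomposition comes from Theorem~\ref{theorem:overlap}, and uniqueness across two rules comes from Theorem~\ref{theorem:sameRuleComparison}; your ``alternative'' route for $i\neq j$ is exactly the paper's route, and your explicit chain $\tau=\tau''=\tau'$ (Theorem~\ref{theorem:overlap} applied inside each decomposition) is slightly more careful than the paper, which reads Theorem~\ref{theorem:sameRuleComparison} as if it already said the matching simple rules coincide.
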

\begin{proof}
By Definition \ref{definition:set_of_rules}, each rule $R'$ in $\mathbf{R}$ is normalised into set $\dot{\mathrm{D}}(\mathrm{S}(\mathrm{N}(R'),V))$, of which, by Theorem \ref{theorem:overlap},  no more than one rule can match $e$. From Corollary \ref{corollary:summary} we know that there exists a $\tau$ in $\dot{\mathrm{D}}(\mathrm{S}(\mathrm{N}(R'),V))$ that matches $e$. For this corollary to be false, there should exist another rule $R''$, whose normalised set of rules $\dot{\mathrm{D}}(\mathrm{S}(\mathrm{N}(R''),V))$ contains a rule $\tau''$, different from $\tau$ that also matches $e$. However, this case would contradict Theorem \ref{theorem:sameRuleComparison}, which states that if two rules match the same event, then the rules in their decomposition that matches such event must be the same. In the other direction, $\mathsf{match}(\tau,e)$ implies that there exists a rule in $R$ that matches $e$, as already shown in Corollary \ref{corollary:overlap_for_all_V}.
\end{proof}

%\begin{proof}[sketch]
%Corollary \ref{corollary:overlap} follows by induction. Our base case is for $n=2$ which has been proven in Theorem \ref{theorem:sameRuleComparison}. Then for our inductive step, we have to consider for $\mathbf{R}=\{R_1,\dots,R_n\}$, an additional rule $R_{n+1}$ and the set of values $V'=V\cup \mathrm{V}(R_{n+1})$ and decide whether all simple \eventrules in $\mathrm{NS}(\mathbf{R}\cup R_{n+1},V')$ are disjoint. By induction, we assume this is true for $\mathrm{NS}(\mathbf{R},V)$, and by Theorem \ref{theorem:overlap}, this is true for $\mathrm{\dot D}(\mathrm{S}(\mathrm{N}(R_{n+1}),\mathrm{V}(R_{n+1})))$. Therefore, we only need to check if any simple \eventrule $\tau$ in $\mathrm{NS}(\mathbf{R},V')$ overlaps with a simple \eventrule $\tau'$ in $\mathrm{\dot D}(\mathrm{S}(\mathrm{N}(R_{n+1}),\mathrm{V}(R_{n+1})))$ such that $\tau \neq \tau'$. This can be argued in a similar manner to Theorem \ref{theorem:sameRuleComparison}.  
%\end{proof}

\subsection{Permissions and Prohibitions}

In the chosen semantics~\cite{salas2025evaluation} the compliance of a state of the world to a policy is expressed as a logical expression of matches between rules against events in the state of the world. A state of the world $\omega$ is valid w.r.t. an ODRL  policy $\langle\mathbf{P},\mathbf{F},\mathbf{O}\rangle$, where $\mathbf{P}$, $\mathbf{F}$ and $\mathbf{O}$ are sets of rules interpreted, respectively, as permissions, prohibitions and obligations, if:

\begin{itemize}
    \item for all events in the state of the world, there exists a permission that matches it, i.e. $\forall e \in \omega, \exists R \in \mathbf{P}, \ \mathsf{match}(R,e)$
    \item for all events in the state of the world, there is no prohibition that matches it, i.e. $\forall e \in \omega, \nexists R \in \mathbf{F}, \ \mathsf{match}(R,e)$
    \item for all obligations, there exists an event in the state of the world that is matched by it, i.e. $\forall R \in \mathbf{O}, \exists e \in \omega, \ \mathsf{match}(R,e)$
\end{itemize}

Our normalisation approach does not alter the semantics of permissions and prohibitions in a policy because, as shown above, an event matches a rule if and only if it matches one of the rules in its normalised form. So any event that is permitted or prohibited before the normalisation will still be permitted or prohibited afterwards. However, our normalisation cannot be used to normalise obligations in the same manner as permissions and prohibitions. This is because validity of obligations requires that for each obligation, there exists an event that is matched by it. Therefore, splitting an obligation into multiple ones effectively generates additional obligations, and thus a stricter policy. For example, if we were to normalise the obligation to pay an amount of at least 10 units $\{(Action=Pay) , (Amount>10)\}$, in the presence of a right operand with value ``20''. This could be normalised into the three obligations $\{(Action=Pay) , (Amount>10), (Amount<20)\}$, $\{(Action=Pay) , (Amount=20)\}$, $\{(Action=Pay) , (Amount>20)\}$, which can only be satisfied by three payments: one of 10 to 20 euros, exactly 20 euros, and more than 20 euros. 

\begin{theorem}
    Let $\langle\mathbf{P}, \mathbf{F}, \mathbf{O}\rangle$ be an ODRL policy and $V$ a set of right operands. A state of the world $\omega$ is valid w.r.t. a policy $\langle\mathbf{P}, \mathbf{F}, \mathbf{O}\rangle$ if and only if $\omega$ is valid w.r.t. an ODRL policy $\langle\mathrm{NS}(\mathbf{P},V), \mathrm{NS}(\mathbf{F},V), \mathbf{O})\rangle$.
    \label{theorem:equivalence_of_normalised_policies}
\end{theorem}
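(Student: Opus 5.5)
The plan is to unfold the definition of validity into its three conjuncts and show that each is unchanged when $\mathbf{P}$ is replaced by $\mathrm{NS}(\mathbf{P},V)$ and $\mathbf{F}$ by $\mathrm{NS}(\mathbf{F},V)$. Since the obligation component $\mathbf{O}$ is identical in both policies, the third condition ($\forall R \in \mathbf{O}, \exists e \in \omega, \mathsf{match}(R,e)$) is literally the same formula in both cases. It therefore suffices to prove the two event-wise equivalences below, for every event $e$:
\begin{equation*}
\exists R \in \mathbf{P},\ \mathsf{match}(R,e) \iff \exists \tau \in \mathrm{NS}(\mathbf{P},V),\ \mathsf{match}(\tau,e),
\end{equation*}
together with the analogous statement for $\mathbf{F}$. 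Quantifying universally over $e \in \omega$ (and negating, for the prohibition condition) then yields the first two conditions.

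Each of these equivalences is exactly Corollary \ref{corollary:overlap_for_all_V}, applied to $\mathbf{R}=\mathbf{P}$ and to $\mathbf{R}=\mathbf{F}$ with the same $V$. Notably, this corollary holds for an arbitrary $V$; it relies only on Corollary \ref{corollary:summary}, which rests on Theorem \ref{theorem:summary}. This matters because the statement does not require $V$ to contain the right operands of the rules. The disjointness results (Theorem \ref{theorem:overlap}, Corollary \ref{corollary:overlap}) are therefore not needed; I would explicitly avoid them, since they carry the extra hypothesis $V \supset \bigcup \mathrm{V}(R_i)$.

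The assembly is then direct. For ($\Rightarrow$), assume $\omega$ is valid w.r.t.\ the original policy. Each $e\in\omega$ is matched by some $R\in\mathbf{P}$, hence by some $\tau\in\mathrm{NS}(\mathbf{P},V)$. If some $\tau\in\mathrm{NS}(\mathbf{F},V)$ matched $e$, the corollary would give an $R\in\mathbf{F}$ matching $e$, a contradiction. Obligations are unchanged. The direction ($\Leftarrow$) is symmetric, using the other implication of the corollary.

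The main point needing care is the passage from $\mathrm{D}$ to $\dot{\mathrm{D}}$ inside $\mathrm{NS}$. I must confirm that discarding simple rules with empty intervals never discards the witness $\tau$ produced by Corollary \ref{corollary:summary}. A $\tau$ that matches an event $e$ cannot contain an empty interval, since no value satisfies an empty interval; hence $\tau\in\dot{\mathrm{D}}$. Conversely, $\dot{\mathrm{D}}\subseteq\mathrm{D}$, so the reverse direction of Corollary \ref{corollary:summary} still applies. A secondary subtlety is the case where $\mathrm{S}$ returns $R_\perp$. Here the original rule matches no event, by Theorem \ref{theorem:summary}, and its contribution to $\mathrm{NS}$ matches nothing either, so neither side of the equivalence is affected.
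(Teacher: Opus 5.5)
Your proposal is correct and follows essentially the same route as the paper: obligations are untouched, and the permission and prohibition conditions are preserved event-wise by applying Corollary \ref{corollary:overlap_for_all_V} to $\mathbf{P}$ and to $\mathbf{F}$ separately; the paper phrases this as a contradiction on a distinguishing event, while you argue each direction directly. Your explicit checks on $\dot{\mathrm{D}}$ versus $\mathrm{D}$ and on $R_\perp$ fill in details the paper only gestures at. You are also right to avoid the disjointness results, since they need the hypothesis $V \supset \bigcup \mathrm{V}(R_i)$, which this theorem does not assume.
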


\begin{proof}

Each of the three conditions of validity evaluate a state of the world against $\mathbf{P}$, $\mathbf{F}$ and $\mathbf{O}$ in isolation from the other types of rules. Therefore the condition over obligations $\mathbf{O}$ is evaluated equivalently for both policies.  Thus, for a state of the world to be valid in one of the two policies, but not in the other, there must be a difference in how either the permissions or the prohibitions are evaluated. This can only occur if there exist an event $e$ that either A) matches only one between the sets of permissions $\mathbf{P}$ and $\mathrm{NS}(\mathbf{P},V)$ or B) matches only one between the sets of prohibitions $\mathbf{F}$ and $\mathrm{NS}(\mathbf{F},V)$. However, both cases A) and B) contradict Corollary \ref{corollary:overlap_for_all_V}.
\end{proof}

Under our interpretation, prohibitions are defined to ``carve out'' exception to permissions. To illustrate, imagine a permission for ``Alice to analyse health data'' but we want to exclude health data from before 2010. In such a case, we would define a prohibition ``Alice to analyse health data'' with an additional constraint that the date of collection of the data is less than 01-01-2010. Our normalisation approach would split the permission for ``Alice to analyse health data'' into three separate, but (equivalent when taken together) permissions, namely a permission for Alice to analyse health data if the date of collection is less than 01-01-2010, another one if the date of collection is exactly 01-01-2010, and lastly a third permission if the date of collection is larger than 01-01-2010. Notice how now, the rule of the prohibition to analyse health data from before 2010 is the same rule as the first permission in the normalisation. In this case, we can simply remove this matching pair of permission and prohibition, and the final, equivalent policy will contain only the two permissions for Alice to analyse health data if the date of collection is larger than 01-01-2010, and the one if the date of collection is exactly 01-01-2010. It is easy to see that the resulting policy is equivalent to the original one, and it does not contain prohibitions.

\begin{theorem}
    Given an ODRL Policy $\langle\mathbf{P},\mathbf{F},\mathbf{O}\rangle$ with $\mathbf{O}=\emptyset$, a state of the world $\omega$ is valid w.r.t. $\langle\mathbf{P},\mathbf{F},\mathbf{O}\rangle$ if and only if for all events $e \in \omega$, there exists a simple \eventrule $\tau \in \mathrm{NS}(\mathbf{P},V) \setminus \mathrm{NS}(\mathbf{F},V)$, with $V \supset \bigcup_{R_i\in \mathbf{P},\mathbf{F}} \mathrm{V}(R_i)$, such that $\mathsf{match}(\tau,e)$.
    \label{theorem:permissions_minus_prohibitions}
\end{theorem}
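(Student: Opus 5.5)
With $\mathbf{O}=\emptyset$, validity of $\omega$ reduces to the first two conditions. So $\omega$ is valid iff every $e\in\omega$ is matched by some permission in $\mathbf{P}$ and by no prohibition in $\mathbf{F}$. I will prove the statement pointwise: for a single event $e$, I show that $e$ is matched by some $R\in\mathbf{P}$ and by no $R\in\mathbf{F}$ if and only if some $\tau\in\mathrm{NS}(\mathbf{P},V)\setminus\mathrm{NS}(\mathbf{F},V)$ matches $e$. Quantifying over $e\in\omega$ then gives the theorem. Throughout, $V$ contains the right operands of all rules in $\mathbf{P}\cup\mathbf{F}$. Hence the simple rules of $\mathrm{NS}(\mathbf{P},V)\cup\mathrm{NS}(\mathbf{F},V)=\mathrm{NS}(\mathbf{P}\cup\mathbf{F},V)$ are split against a common $V$, and Corollary~\ref{corollary:overlap} applies to the combined set $\mathbf{P}\cup\mathbf{F}$.

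($\Rightarrow$) Suppose $e$ is matched by some $R\in\mathbf{P}$ and by no rule of $\mathbf{F}$. By Corollary~\ref{corollary:overlap_for_all_V}, some $\tau\in\mathrm{NS}(\mathbf{P},V)$ matches $e$. If $\tau$ also belonged to $\mathrm{NS}(\mathbf{F},V)$, then some $\tau$ in $\mathrm{NS}(\mathbf{F},V)$ would match $e$. Corollary~\ref{corollary:overlap_for_all_V} applied to $\mathbf{F}$ would then give a prohibition matching $e$, a contradiction. Hence $\tau\in\mathrm{NS}(\mathbf{P},V)\setminus\mathrm{NS}(\mathbf{F},V)$.

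($\Leftarrow$) Suppose $\tau\in\mathrm{NS}(\mathbf{P},V)\setminus\mathrm{NS}(\mathbf{F},V)$ matches $e$. Corollary~\ref{corollary:overlap_for_all_V} gives a permission in $\mathbf{P}$ matching $e$. It remains to rule out a prohibition $R'\in\mathbf{F}$ matching $e$, and this is the main step. Assume such an $R'$ exists. By Corollary~\ref{corollary:overlap_for_all_V}, some $\tau'\in\mathrm{NS}(\mathbf{F},V)$ matches $e$, and $\tau'\neq\tau$ because $\tau\notin\mathrm{NS}(\mathbf{F},V)$. Now $\tau$ and $\tau'$ are two distinct simple rules of $\mathrm{NS}(\mathbf{P}\cup\mathbf{F},V)$ that both match $e$. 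This contradicts the uniqueness part of Corollary~\ref{corollary:overlap}, which follows from Theorem~\ref{theorem:sameRuleComparison} and Theorem~\ref{theorem:overlap}.

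The obstacle I expect is ensuring that ``distinct'' means semantically distinct and not merely syntactically different. Two normalised simple rules that match the same event must be literally identical, for instance with the same ordering and the same representation of constraints. Otherwise $\tau$ could fail set-membership in $\mathrm{NS}(\mathbf{F},V)$ only because of syntax. I would handle this by fixing a canonical representation of each simple rule, as a set of simplified constraints, one interval or equality per left operand. Theorem~\ref{theorem:overlap} is then read as identity of these canonical forms. I would also use the fact that $\dot{\mathrm{D}}$ discards empty rules, so $\tau$ and $\tau'$ are satisfiable and the argument is not vacuous.
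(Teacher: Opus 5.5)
Your proof is correct and follows essentially the same route as the paper. The forward direction uses the match-preservation corollaries to place the matching simple rule in $\mathrm{NS}(\mathbf{P},V)$ and exclude it from $\mathrm{NS}(\mathbf{F},V)$, and the backward direction uses the uniqueness part of Corollary~\ref{corollary:overlap} to rule out a matching prohibition. Applying that corollary explicitly to $\mathbf{P}\cup\mathbf{F}$ via $\mathrm{NS}(\mathbf{P},V)\cup\mathrm{NS}(\mathbf{F},V)=\mathrm{NS}(\mathbf{P}\cup\mathbf{F},V)$, together with your remark on canonical syntactic forms, makes precise what the paper's proof leaves implicit.
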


\begin{proof}
    $\Rightarrow$ If $\omega$ is valid w.r.t. $\langle\mathbf{P},\mathbf{F},\mathbf{O}\rangle$, then for all events $e \in \omega$ there exists a permission $P \in \mathbf{P}$ that matches $e$ and no prohibition $F \in \mathbf{F}$ that matches $e$, or equivalently, for all prohibitions in $\mathbf{F}$, $\mathsf{match}(F,e)$ is false. From Corollary \ref{corollary:summary} it follows that if there exists a permission $P$ such that $\mathsf{match}(P,e)$ is true, then there must exist a simple \eventrule $\tau \in \mathrm{\dot D}(\mathrm{S}(\mathrm{N}(P),V))$ such that $\mathsf{match}(\tau,e)$ is true. Moreover, since $\mathrm{NS}(\mathrm{\mathbf{P}},V)$ is a union that includes $\mathrm{\dot D}(\mathrm{S}(\mathrm{N}(P),V))$,  if such a simple \eventrule exists, then it must also exist in $\mathrm{NS}(\mathbf{P},V)$.  Conversely, if a prohibition $F$ doesn't match $e$, then there doesn't exist a simple \eventrule in $\mathrm{\dot D}(\mathrm{S}(\mathrm{N}(F),V))$ such that $\mathsf{match}(\tau,e)$ is true. Furthermore, since this is true for all prohibitions $F \in \mathbf{F}$, it must also be true for $\mathrm{NS}(\mathbf{F},V)$. We can conclude, that $\tau$ can be found in $\mathrm{NS}(\mathbf{P},V)$ and not in $\mathrm{NS}(\mathbf{F},V)$, and therefore exists in $\mathrm{NS}(\mathbf{P},V) \setminus \mathrm{NS}(\mathbf{F},V)$. \\
    $\Leftarrow$ If $\tau \in \mathrm{NS}(\mathbf{P},V) \setminus \mathrm{NS}(\mathbf{F},V)$ then this means that $\tau \in \mathrm{NS}(\mathbf{P},V)$ and $\tau \notin \mathrm{NS}(\mathbf{F},V)$. 
    Then, if $\tau \in \mathrm{NS}(\mathbf{P},V)$ it means there exists a rule $P \in \mathbf{P}$ such that $\tau \in \mathrm{\dot D}(\mathrm{S}(\mathrm{N}(P),V))$ for some $V$. Simultaneously, if $\tau \notin \mathrm{NS}(\mathbf{F},V)$, we know that for all $F \in \mathbf{F}$, $\tau \notin \mathrm{S}(\mathrm{N}(F),V)$. Because of Corollary \ref{corollary:overlap}, if $\mathsf{match}(\tau,e)$, then there exists a rule $P\in \mathbf{P}$ such that $\mathsf{match}(P,e)$ is true.
    Moreover, there can be no other \eventrule $\tau'$ in $\mathrm{NS}(\mathbf{P},V)$ or $\mathrm{NS}(\mathbf{F},V)$ such that $\mathsf{match}(\tau',e)$ is true. Given the above, the state of the world $\omega$ must be valid w.r.t. $\langle\mathbf{P},\mathbf{F},\mathbf{O}\rangle$.
\end{proof}
% TODO Add corollary where O is not empty
\begin{corollary}
 Given an ODRL Policy $\langle\mathbf{P},\mathbf{F},\mathbf{O}\rangle$, a state of the world $\omega$ is valid w.r.t. $\langle\mathbf{P},\mathbf{F},\mathbf{O}\rangle$ if and only if it is valid w.r.t. $\langle\mathrm{NS}(\mathbf{P},V) \setminus \mathrm{NS}(\mathbf{F},V),\emptyset,O\rangle$.
\label{corollary:permissions_minus_prohibitions}  
\end{corollary}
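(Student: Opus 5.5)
The plan is to split validity into its three conditions and treat obligations separately from permissions and prohibitions, so that the claim reduces to Theorem \ref{theorem:permissions_minus_prohibitions}. Write $\mathbf{P}^* = \mathrm{NS}(\mathbf{P},V) \setminus \mathrm{NS}(\mathbf{F},V)$. Validity of $\omega$ w.r.t.\ $\langle\mathbf{P},\mathbf{F},\mathbf{O}\rangle$ is the conjunction of three conditions: (i) every event is matched by some permission; (ii) no event is matched by a prohibition; (iii) every obligation in $\mathbf{O}$ matches some event of $\omega$.

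Condition (iii) refers only to $\mathbf{O}$ and $\omega$. The same condition appears unchanged in both policies, so it holds for one exactly when it holds for the other.

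Next, consider conditions (i) and (ii) for the original policy. Their conjunction is exactly validity of $\omega$ w.r.t.\ $\langle\mathbf{P},\mathbf{F},\emptyset\rangle$, since the obligation condition is vacuous when there are no obligations. By Theorem \ref{theorem:permissions_minus_prohibitions}, applied to that obligation-free policy, this holds iff every $e\in\omega$ is matched by some $\tau\in\mathbf{P}^*$.

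For the target policy $\langle\mathbf{P}^*,\emptyset,\mathbf{O}\rangle$, condition (ii) is vacuous because there are no prohibitions. Condition (i) says precisely that every $e \in \omega$ is matched by some $\tau \in \mathbf{P}^*$, since each element of $\mathbf{P}^*$ is a simple rule used directly as a permission. So for both policies, conditions (i) and (ii) together reduce to the same statement. Combined with the agreement on (iii), this gives the equivalence.

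The main obstacle is choosing $V$. The corollary leaves $V$ unquantified, but Theorem \ref{theorem:permissions_minus_prohibitions} needs $V \supset \bigcup_{R_i \in \mathbf{P}\cup\mathbf{F}} \mathrm{V}(R_i)$. I would state this hypothesis explicitly, taking $V$ to contain all right operands of $\mathbf{P}$ and $\mathbf{F}$. The operands of $\mathbf{O}$ play no role, because obligations are never normalised.

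Without this hypothesis the argument fails. A permission simple rule might only partially overlap a prohibition simple rule, and then removing the exact set difference would not carve out the prohibited events. That is the disjointness guaranteed by Theorem \ref{theorem:overlap}, which is what the set-difference construction depends on.
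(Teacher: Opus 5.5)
Your proof is correct and takes the same route as the paper. The paper also obtains this corollary from Theorem \ref{theorem:permissions_minus_prohibitions} by noting, as in the proof of Theorem \ref{theorem:equivalence_of_normalised_policies}, that the obligation condition is evaluated in isolation and is identical in both policies. Your explicit hypothesis $V \supseteq \bigcup_{R_i \in \mathbf{P}\cup\mathbf{F}} \mathrm{V}(R_i)$ is a worthwhile addition: the corollary as stated leaves $V$ unquantified, and without this inherited hypothesis (e.g.\ with $V=\emptyset$) a prohibition that properly restricts a permission would not be removed by the set difference.
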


Corollary \ref{corollary:permissions_minus_prohibitions} expands the result of Theorem \ref{theorem:permissions_minus_prohibitions} to the case of non-empty sets of obligations for the same reason described in the proof of Theorem \ref{theorem:equivalence_of_normalised_policies}.
This is our main result that allows us to reformulate a policy with permissions and prohibitions to a policy of only permissions. Since our normalisation can be applied equally to permissions and prohibitions, our approach can be also used in the case of permission-by-default semantics, with the obvious change of removing permissions, rather than prohibitions, in the last step. 

\section{Complexity}

In this section, we analyse the complexity of the policy comparison decision problems through our normalisation method. Since our normalised rules can decide containment, equivalence or overlap by iterating over all the simple \eventrules that comprise the normalised rules, the size of the sets $\mathrm{D}(R,V)$ are the main determinants of the complexity. 
Let us assume a single rule $R$ has $n$ constraints. The most relevant metrics for our analysis are the size of the rules (i.e., the number of constraints) $n$, the number of distinct left operands (or attributes) that are mentioned in rules $L\leq n$, and the size of $V$, the distinct values appearing for each left operand.

For a single \eventrule of size $n$, the worst case scenario for the regularisation is when each constraint is a logical constraint in CNF form. Assume this logical constraint $c$ contains a total of $n$ constraints, where the simple constraints are distributed into $j$ conjunctions, each containing $k$ simple constraints. In other words, $c$ is a conjunction $(c_1\wedge c_2 \wedge \ldots \wedge c_j)$, where each $c_i$ is of the form $(x_1\vee x_2 \vee \ldots \vee x_k)$. Furthermore, if we assume each of the simple constraints $x_i$ is a constraint that after reformulation would produce an additional disjunction, $c_i$ would double in size to $2k$. The DNF expansion of $c$ would result in a disjunction of $(2k)^j$ conjunctions with $j$ simple constraints each, which adds up to $j \times (2k)^j$ simple constraints. This is maximised when $k$ is $e$ and $j$ is $\frac{n}{e}$, leading to a total of $e^{\frac{n}e{}}$ \eventrules, each containing a total of $\frac{n}{e}$ simple constraints. In practice, we can only have a natural number of constraints, so this is simplified to $k=\lfloor e \rfloor=2$ and $j=\frac{n}{2}$, and thus we have $(2\times2)^\frac{n}{2}=2^n$ permissions, each of which has at least $\frac{n}{2}$ constraints in addition to the core ODRL elements.
% In practice, this is worst-case scenario is not likely to appear in the real world; it would require that each constraint be a logical constraint in CNF form.

The worst case for the splitting of intervals is when rule $R$ has no constraints other than those for the main ODRL elements (i.e., it allows any value for all other left operands) and $R'$ has $L=n$ distinct left operands. In such a case, $R$ will be split into $\prod_{i=0}^L 2|V(R',\lambda_i)|+1$, where $|V(R',\lambda_i)|$ is the number of distinct values for $\lambda_i$ in $R'$. Assuming the total number of values is $|V|$ and the number of distinct left operands is $L$, this is maximised when each $|V(R',\lambda_i)|$ is $\frac{|V|}{L}$ (for simplicity, assume $|V|=kL$ for some integer). In such a case, the total number of split intervals is $(2\frac{|V|}{L}+1)^L$, which is polynomial on $|V|$ and exponential on $L$. 
Note that in this scenario, the DNF expansion of $R$ would yield simply $R$ because there are no constraints to be reformulated. Therefore, this worst case scenario is not directly related to the worst case scenario for the DNF expansion.
On the other hand, the best case scenario is when all constraints are equalities and both rules mention the same left operands, because in such a case we don't have to split any intervals.

For a set of rules $\mathbf{R}$, its size is determined by the sizes of the normalisation of each of its rules. If we assume that each $R_i \in \mathbf{R}$ is disjoint with $R_j \in \mathbf{R}$ such that $R_i \neq R_j$, the total size of $\mathrm{NS}(R,V)$ is $\sum_{R \in \mathbf{R}}|\mathrm{D}(R,V)|$. In other words, it grows linearly with the number of rules in $\mathbf{R}$ and exponential with the size of each rule, leading to a size of $O(|\mathbf{R}|\cdot 2^n)$.

\section{Related Work}

To our knowledge, ours is the first that addresses the problem of normalising ODRL policies, and provides a comparison of ODRL policies via the normalisation of its components. We now review the most relevant pieces of work in the literature that address similar semantics problems. Garc{\'\i}a et al.~\cite{garcia2005formalising} employed semantic web technologies such as OWL~\cite{mcguinness2004owl}, to translate ODRL (in XML format) to OWL ontologies and RDF graphs, which allowed them to leverage the IPROnto ontology for digital rights management~\cite{delgado2003ipronto}. However, their approach was tailored for ODRL 1.1, which is deprecated, and lacks the richer tapestry of rule types such as prohibitions, and obligation. 
Pucella and Weissman~\cite{pucella2006formal} proposed a translation of ODRL 1.1 Policies to logical formulas. The normalisation steps discussed in this study could also be applied to their constraints, which would also allow logical normal forms.  
Steyskal and Polleres~\cite{steyskal2015towards} earliest proposed semantics for ODRL 2.0+ based on rule-matching and logic programs; notably their semantics took into consideration the implicit dependencies between actions. 
Slabbinck et al.~\cite{slabbinck2025interoperable} defined a vocabulary to describe interoperable ODRL Evaluation reports and also presented their ODRL Evaluator, a piece of software that can evaluate policies for access control scenarios. 
Kebede et al.\cite{kebede2018critical} discuss the representational power of ODRL, reinforced by presenting use-cases and examples, and highlights some of the limitations of ODRL. For example, the ambiguous semantics of duties (as per the currently published recommendations) or the granularity of parties (e.g., certain members of an organisation). 
De Vos et al.~\cite{de2019odrl} implement policy and compliance checking by translating policies into answer-set programs to check for compliance. The semantics of answer-set programming are ideal for applications that adopt a closed-world assumption. Their proposed model is ideal for representing real-world legal frameworks such as the GDPR. 
Kieffer et al.~\cite{kieffer2025composing} propose an approach that computes the least restrictive license that complies to a set of policies. If such a license does not exist, one can assume that there are no events that are valid w.r.t. all policies simultaneously. Therefore, said approach could be used to verify policy overlap. However, the problems of containment and equivalence were not a focus of their work, and cannot be trivially derived.

\section{Conclusion}

We have presented a procedure to compute a normal form for ODRL rules, that preserves their original evaluation semantics, while yielding useful properties, following a recently published formalisation of ODRL evaluation. 
Two key steps of this procedure involve decomposing rules with complex logic constraints into rules with simple conjunctions of basic constraints, and splitting constraints over numeric intervals into their basic components, with respect to a given set of known constants. This second step  
guarantees that each simple \eventrule in the normal form is disjoint, which allows to decide containment, equivalence or overlap between rules by checking equality of pairs of simple \eventrules instead of combinations of simple \eventrules. 
The useful properties of the normalisation procedure can also be applied to sets of ODRL rules, and therefore ODRL policies with permissions and prohibitions. 
Notably, only one set of rules between permissions and prohibitions needs to be retained, and the other can be removed completely, leaving only a set of disjoint permissions (or prohibitions) that can then be checked for matches to events. This, combined with the simplification of logic combinations of constraints, provides a mechanism to translate complex policies into equivalent ones that do not require logic constraints, and only require one type of rule between permissions and prohibitions, increasing the interoperability between tools with levels of support for ODRL features. Our normalisation targets core elements of ODRL, but additional features are left for future work, such as additional rule types, constraints over sets, and the interaction with inference.

\bibliography{main}
\bibliographystyle{splncs04}

\end{document}